\providecommand{\pgfsyspdfmark}[3]{}
\PassOptionsToPackage{usenames,dvipsnames}{xcolor}

\documentclass[final,12pt,cleveref]{colt2023-arxiv} 


\usepackage{times}
\usepackage{algorithm}
\usepackage{algorithmic}
\usepackage{wrapfig}

\usepackage[utf8]{inputenc}
\usepackage[nolist,nohyperlinks]{acronym}
\usepackage{multirow}
\usepackage{enumitem}

\newcommand{\todof}[1]{}
\newcommand{\todok}[1]{}
\newcommand{\todoi}[1]{}
\newcommand{\sout}[1]{}
\newcommand{\kj}[1]{}

\renewcommand{\Pr}{\field{P}}

\newcommand{\sC}{\mathcal{C}}
\newcommand{\sX}{\mathcal{X}}

\newcommand{\field}[1]{\mathbb{#1}}

\newcommand{\R}{\field{R}}
\newcommand{\Nat}{\field{N}}
\newcommand{\E}{\field{E}}
\newcommand{\Var}{\mathrm{Var}}

\newcommand{\btheta}{\boldsymbol{\theta}}

\newcommand{\sgn}{\mbox{\sc sgn}}

\newcommand{\dt}{\displaystyle}

\newcommand{\reals}{\mathbb{R}}


\DeclareMathOperator{\Regret}{Regret}
\DeclareMathOperator{\Wealth}{Wealth}

\newcommand{\KL}[2]{\operatorname{KL}\left({#1};{#2}\right)}  

\usepackage{commath}
\usepackage{xspace}

\def\ddefloop#1{\ifx\ddefloop#1\else\ddef{#1}\expandafter\ddefloop\fi}
\def\ddef#1{\expandafter\def\csname c#1\endcsname{\ensuremath{\mathcal{#1}}}}
\ddefloop ABCDEFGHIJKLMNOPQRSTUVWXYZ\ddefloop
\def\ddef#1{\expandafter\def\csname b#1\endcsname{\ensuremath{{\boldsymbol{#1}}}}}
\ddefloop ABCDEFGHIJKLMNOPQRSUVWXYZabcdefghijklmnopqrtsuvwxyz\ddefloop
\def\ddef#1{\expandafter\def\csname h#1\endcsname{\ensuremath{\hat{#1}}}}
\ddefloop ABCDEFGHIJKLMNOPQRSTUVWXYZabcdefghijklmnopqrsuvwxyz\ddefloop 
\def\ddef#1{\expandafter\def\csname hc#1\endcsname{\ensuremath{\widehat{\mathcal{#1}}}}}
\ddefloop ABCDEFGHIJKLMNOPQRSTUVWXYZ\ddefloop
\def\ddef#1{\expandafter\def\csname r#1\endcsname{\ensuremath{\mathring{#1}}}}
\ddefloop ABCDEFGHIJKLMNOPQRSTUVWXYZabcdefghijklmnopqrtsuvwxyz\ddefloop
\def\ddef#1{\expandafter\def\csname bar#1\endcsname{\ensuremath{\bar{#1}}}}
\ddefloop ABCDEFGHIJKLMNOPQRSTUVWXYZabcdefghijklmnopqrtsuvwxyz\ddefloop
\def\ddef#1{\expandafter\def\csname wbar#1\endcsname{\ensuremath{\overline{#1}}}}
\ddefloop ABCDEFGHIJKLMNOPQRSTUVWXYZabcdefghijklmnopqrtsuvwxyz\ddefloop
\def\ddef#1{\expandafter\def\csname tc#1\endcsname{\ensuremath{\widetilde{\mathcal{#1}}}}}
\ddefloop ABCDEFGHIJKLMNOPQRSTUVWXYZ\ddefloop

\DeclareMathOperator{\EE}{\mathbb{E}}

\DeclareMathOperator{\PP}{\mathbb{P}}

\newcommand{\fr}[2]{ { \frac{#1}{#2} }}

\newcommand*\diff{\mathop{}\!\mathrm{d}}

\def\dt{{\ensuremath{\delta}\xspace} }
\def\KL{\ensuremath{\normalfont{\mathsf{KL}}}}

\newcommand{\prr}[1]{\left( #1 \right)}
\newcommand{\bra}[1]{\left[ #1 \right]}

\newcommand{\bmid}{\;\middle|\;}
\newcommand{\df}{:=}


\newcommand{\ps}{P_n}

\newcommand{\pz}{P_0}

\newcommand{\leqC}{\lesssim}



\def\kl{{\mathsf{kl}}}

\def\hmu{{\ensuremath{\hat{\mu}}} }

\def\th{\theta}

\def\NN{\mathbb{N}}
\usepackage{algorithmic}
\begin{acronym}
\acro{RLS}{Regularized Least Squares}
\acro{ERM}{Empirical Risk Minimization}
\acro{RKHS}{Reproducing kernel Hilbert space}
\acro{DA}{Domain Adaptation}
\acro{PSD}{Positive Semi-Definite}
\acro{SGD}{Stochastic Gradient Descent}
\acro{OGD}{Online Gradient Descent}
\acro{GD}{Gradient Descent}
\acro{SGLD}{Stochastic Gradient Langevin Dynamics}
\acro{IW}{Importance Weighted}
\acro{MGF}{Moment-Generating Function}
\acro{ES}{Efron-Stein}
\acro{ESS}{Effective Sample Size}
\acro{KL}{Kullback-Liebler}
\acro{SVD}{Singular Value Decomposition}
\acro{PL}{Polyak-Łojasiewicz}
\acro{NTK}{Neural Tangent Kernel}
\acro{KLS}{Kernelized Least-Squares}
\acro{KRLS}{Kernelized Regularized Least-Squares}
\acro{ReLU}{Rectified Linear Unit}
\acro{NTRF}{Neural Tangent Random Feature}
\acro{NTF}{Neural Tangent Feature}
\acro{RF}{Random Feature}
\acro{RWY}{Raskutti-Wainwright-Yu}
\acro{CW}{Celisse-Wahl}
\acro{PRM}{Penalized Reward Maximization}
\end{acronym}


\title[Tighter PAC-Bayes Bounds Through Coin-Betting]{Tighter PAC-Bayes Bounds Through Coin-Betting}



\coltauthor{%
 \Name{Kyoungseok Jang}\footnote{Authors are ordered alphabetically.} \Email{ksajks@arizona.edu}\\
 \addr University of Arizona%
 \AND
 \Name{Kwang-Sung Jun} \Email{kjun@cs.arizona.edu}\\
 \addr University of Arizona%
 \AND 
 \Name{Ilja Kuzborskij} \Email{iljak@deepmind.com}\\
 \addr DeepMind%
 \AND 
 \Name{Francesco Orabona} \Email{francesco@orabona.com}\\
 \addr Boston University%
}

\begin{document}

\maketitle

\begin{abstract}%
  We consider the problem of estimating the mean of a sequence of random elements $f(X_1, \theta)$ $, \ldots, $ $f(X_n, \theta)$ where $f$ is a fixed scalar function, $S=(X_1, \ldots, X_n)$ are independent random variables, and $\theta$ is a possibly $S$-dependent parameter.
  An example of such a problem would be to estimate the generalization error of a neural network trained on $n$ examples where $f$ is a loss function. Classically, this problem is approached through concentration inequalities holding uniformly over compact parameter sets of functions $f$, for example as in Rademacher or VC type analysis.
However, in many problems, such inequalities often yield numerically vacuous estimates.
Recently, the \emph{PAC-Bayes} framework has been proposed as a better alternative for this class of problems for its ability to often give numerically non-vacuous bounds.
In this paper, we show that we can do even better: we show how to refine the proof strategy of the PAC-Bayes bounds and achieve \emph{even tighter} guarantees.
Our approach is based on the \emph{coin-betting} framework that derives the numerically tightest known time-uniform concentration inequalities from the regret guarantees of online gambling algorithms.
In particular, we derive the first PAC-Bayes concentration inequality based on the coin-betting approach that holds simultaneously for all sample sizes. 
We demonstrate its tightness showing that by \emph{relaxing} it we obtain a number of previous results in a closed form including Bernoulli-KL and empirical Bernstein inequalities.
Finally, we propose an efficient algorithm to numerically calculate confidence sequences from our bound, which often generates nonvacuous confidence bounds even with one sample, unlike the state-of-the-art PAC-Bayes bounds.
\end{abstract}

\begin{keywords}%
  Concentration inequalities, PAC-Bayes, confidence sequences, coin-betting.
\end{keywords}

\setlength{\abovedisplayskip}{3pt}
\setlength{\belowdisplayskip}{3pt}
\setlength{\abovedisplayshortskip}{3pt}
\setlength{\belowdisplayshortskip}{3pt}

\section{Introduction}
Suppose that $S = (X_1, \ldots, X_n) \in \sX$ are random elements distributed identically and independently from each other, on a probability space $(\sX, \Sigma(\sX), P_X)$.
For illustration, assume that $\sX = \reals$: A classical problem in probability and statistics is to quantify how quickly an average $(X_1 + \dots + X_n) / n$ converges to the mean $\E[X_1]$, and over the decades this problem was successfully attacked under various assumptions on the probability space through \emph{concentration inequalities}~\citep{boucheron2013concentration}.
The key assumption which enables these concentration inequalities to exhibit fast convergence to the mean is \emph{independence}.
However, in many learning-theoretic problems we are interested in the concentration of random elements which themselves \emph{depend on the sample} $S$, and therefore are not independent.
In this paper, we formalize the above by assuming that we are given a \emph{fixed} measurable function $f : \Theta \times \sX \to [0,1]$, where $\Theta$ is a \emph{parameter} space, and so now we are interested in the concentration of $(f(\theta, X_1) + \dots + f(\theta, X_n)) / n$ around its mean, where $\theta$ is potentially $S$-dependent.
For example, $f(\theta, X_i)$ could be the loss incurred by a learning algorithm on the $i$-th example, where the parameters $\theta$ are generated based on the sample $S$.
In the context of this example, the mean $\EE[f(\theta, X_1)]$ is called the statistical risk.

To this end, the classical approach to alleviating the dependence nuance is to derive \emph{uniform} concentration inequalities that hold \emph{simultaneously for all} parameters in a compact set $\Theta$.
For example, consider the following concentration inequality that holds with probability at least $1-\delta$,\footnote{The notation $\leqC$ hides universal constants and logarithmic factors.}
\begin{align*}
  \sup_{\theta \in \Theta} \ \left| \frac1n \sum_{i=1}^n f(\theta, X_i) - \E[f(\theta, X_1)] \right| \leqC \sqrt{\frac{\mathrm{capacity}(\Theta) + \ln \frac{1}{\delta}}{n}}~.
\end{align*}
Here the capacity term, such as VC dimension, metric entropy, or Rademacher complexity \citep{wainwright2019high}, scales with the ``size'' of the set $\Theta$.
Mentioned notions accurately capture the capacity in many learning problems, such as with linear parameterizations \citep{bartlett2002rademacher,kakade2008complexity}.
However, in some other problems, e.g., in learning with overparameterized neural networks, the pessimistic nature of uniform bounds makes them vacuous~\citep{zhang17understanding}.

On the other hand, in recent years, there has been a strong interest in the alternative to the uniform bounds, based on the \emph{PAC-Bayes} analysis~\citep{McAllester98}, which, remarkably, on some instances demonstrates \emph{non-vacuous} bounds for the generalization ability of deep learning algorithms~\citep{dziugaite2017computing,perez-ortiz2020tighter,zhou2019nonvacuous}.
%
%
%
In the PAC-Bayes analysis instead of taking $\sup_{\theta \in \Theta}$ as in the uniform approach, we assume that the parameters $\theta$ are now \emph{random} and follow a \emph{data-dependent}, so-called, \emph{posterior} distribution $\ps$.
In this paper, we are interested in estimating an expected mean $\int \mu_{\theta} \diff \ps(\theta)$ with $\mu_{\theta} \df \E[f(X_1, \theta) \mid \theta]$, uniformly over all data-dependent posteriors $\ps$.
This setting covers the one considered in the PAC-Bayes literature~\citep{alquier2021user} where usually the function $f$ represents the loss of a predictor parameterized by $\theta$ from a parameter space $\Theta$.
In this view, we can think of $\mu_{\theta}$ as the risk of a predictor parameterized by $\theta$, while $\int \mu_{\theta}  \diff \ps(\theta)$ is the risk of a randomized predictor that uses a random $\theta$ drawn from the distribution $\ps$.
The second important component of the PAC-Bayes model is a \emph{prior} distribution $\pz$ which does not depend on $S$ and captures our prior belief about the inductive bias in the problem instance.
A basic PAC-Bayes concentration inequality~\citep{McAllester98} then takes the form of
\begin{align*}
  \Bigg| \int \frac1n \sum_{i=1}^n f(\theta, X_i) \diff \ps - \int \mu_{\theta}  \diff \ps(\theta) \Bigg| \leqC \sqrt{\frac{\KL(\ps \| \pz) + \ln \frac{1}{\delta}}{n}}
\end{align*}
with probability at least $1-\delta$, where instead of the capacity term we have a \ac{KL} divergence between the posterior and the prior.
Numerically speaking, PAC-Bayes bounds tend to give much tighter bounds than their uniform counterparts, largely because the $\KL$ term is typically smaller than the capacity term (such as VC dimension), for an appropriate (user's) choice of $\ps, \pz$.
\paragraph{Our contributions: PAC-Bayes meets coin-betting}
In this paper, we show that we can obtain \emph{even tighter} PAC-Bayes bounds using recent advances in the theory of concentration inequalities through gambling algorithms~\citep{OrabonaJ21}.
In particular, we show that it is possible to obtain a new \emph{coin-betting} based PAC-Bayes bound that directly \emph{implies} a number of previous results.
Moreover, numerically evaluating this new upper bound, we show that it is numerically tighter than all previous approaches.
In \cref{sec:results} we present our main result, a concentration inequality of the following form, which holds with probability at least $1-\delta$, \emph{simultaneously for all} $n \in \mathbb{N}$, \emph{all} data-dependent posteriors $\ps$, and all data-free priors $\pz$:
\begin{align}
\label{eq:intromain}
\int \max_{\lambda \in [-\frac{1}{1-\mu_{\theta}}, \frac{1}{\mu_{\theta}}]} \ \sum_{i=1}^n \ln\prr{1 + \lambda (f(\theta,X_i) - \mu_{\theta})} \diff \ps(\theta)
  \leq
  \KL(\ps \| \pz)
  +
  \ln \frac{c \, \sqrt{n}}{\delta}~.
\end{align}
Moreover, the confidence interval for $\int \mu_{\theta} \diff \ps(\theta)$ is obtained by solving the optimization problems
\begin{align}\label{eq:intro-optim}
  \Bigg[\min_{(\mu_\theta)_{\th\in\Theta} \in M }  \int \mu_\theta  \diff \ps(\theta), \,\, \max_{(\mu_\theta)_{\th\in\Theta} \in M}  \int \mu_\theta  \diff \ps(\theta) \Bigg],
\end{align}
where $M$ is a class of mean functions $(\mu_\th)_{\th\in\Theta}$ that satisfy the constraint given by \cref{eq:intromain}, which is the first of its kind in the PAC-Bayes literature.
We formalize these optimization problems in \cref{sec:howtocompute} and show that the constraint is \emph{convex}, making them efficiently solvable in some cases.
In \cref{sec:experiments} we also experimentally validate our approach.

In addition, we show that \cref{eq:intromain} is tighter than some well-known PAC-Bayes inequalities, such as McAllester's inequality~\citep{McAllester98}, Maurer's inequality for Bernoulli \ac{KL} divergence~\citep{maurer2004note}, and PAC-Bayes empirical Bernstein's inequality~\citep{tolstikhin2013pac}.
This is done by \emph{relaxing} inequality in \cref{eq:intromain} by simple lower bounds of the logarithmic term.
We show that even relaxing \cref{eq:intromain} leads to a tighter bound than Maurer's one that is known to be very tight numerically.

The observation above implies that from our result we can derive all these versions, obtain bounds, and then take an intersection of all of them \textit{without having to split $\dt$}.
This is in stark contrast to empirical Bernstein's bounds~\citep{tolstikhin2013pac} that are often numerically looser than KL bounds, while being orderwise tighter than KL bounds like Maurer's one.
Attempting to take an intersection with KL bounds requires splitting $\dt$, which undesirably inflates the bound.
This is not the case for our method -- our result can be seen as ``the right'' type of concentration inequality that is superior to the rest up to constant factors inside an additive logarithmic term.
Finally, \cref{eq:intromain} (and all its corollaries) holds simultaneously for all sample sizes,
delivering time-uniform PAC-Bayes confidence sequences.

\paragraph{Organization of the paper}
After a discussion of related work (\cref{sec:related}) and notations (\cref{sec:def}), in \cref{sec:betting} we briefly present the idea behind concentration through coin-betting.
In \cref{sec:results} we present our main results, discuss some implications, and include the proof of a new concentration inequality in \cref{sec:pac-bayes-cb-proof}.
In \cref{sec:howtocompute}, we discuss how to compute our concentration inequality numerically (without any relaxations).
Finally, in \cref{sec:experiments} we present numerical simulations comparing our inequality to a number of baselines from the PAC-Bayes literature.
\section{Related Work}
\label{sec:related}
\paragraph{Concentration from coin-betting}
The coin-betting formalism considered here (see \Cref{sec:betting}) goes back to \citet{Ville39} and Kelly betting system~\citep{Kelly56} and has an intimate connection to the Universal Portfolio theory~\citep{Cover91}.
Building on the ideas of \citet{Ville39}, \citet{ShaferV05} introduced a general framework aiming at giving a foundation to the theory of probability rooted in gambling strategies. However, their framework is very general and it does not suggest specific methods to construct the betting strategies.
The first paper to introduce the idea of using the regret of online betting algorithms to produce new concentration inequalities was in \citet{JunO19}, which in turn builds on \citet{RakhlinS17} that showed the equivalence between the regret guarantees of generic online linear algorithms and martingale tail bounds.
\paragraph{PAC-Bayes}
Since the introduction of PAC-Bayes bounds by \citet{McAllester98},
there has been significant growth and development in both theory and applications; see \citet{alquier2021user} for a comprehensive survey.
Early papers focused on tightening the bound of \citet{McAllester98}, which can be seen as a PAC-Bayes version of Hoeffding's inequality.
In particular, \citet{LangfordCaruana2001,seeger2002,maurer2004note} focused on the setting of a binary classification where the goal is to bound $\KL$ divergence between Bernoulli distributions.
Such bounds are tighter than the mere difference of the risk and empirical risk due to Pinsker's inequality, and the numerically tightest known inequality within this group is Maurer's inequality \citep{maurer2004note} (see for instance experiments of \citet{MhammediGG19}).
In this paper, we recover the result of \citet{maurer2004note} by relaxing \cref{eq:intromain}.

Towards data-dependent bounds, \citet{tolstikhin2013pac} adapted an empirical Bernstein's inequality~\citep{audibert2007tuning,maurer2009empirical} to the PAC-Bayes setting, making generalization bounds variance dependent.
Once again, we recover the PAC-Bayes empirical Bernstein's inequality by relaxing our main result of \cref{eq:intromain} without any plug-in arguments, through a relatively straightforward proof.
Several works went further in making bounds data-dependent by manipulating the $\KL$ term.
\citet{ambroladze2006tighter} explored the idea of splitting the sample and deriving the \emph{prior} from a held-out sample while obtaining the posterior from the remaining part.
This technique proved very fruitful in making PAC-Bayes bounds much tighter.
Indeed, recent non-vacuous generalization bounds for deep neural networks are largely attributed to this technique~\citep{dziugaite2018entropy,perez-ortiz2020tighter}.
Clearly, the results developed in this paper can be readily applied together with the splitting technique.
The splitting technique was also investigated beyond the $\KL$ term.
In particular, \citet{MhammediGG19,wu2022i} developed intricate bounds akin to empirical Bernstein's inequalities where the splitting is done with respect to the sample variance (two variance terms) in addition to the $\KL$ term.
These are among the numerically tightest known PAC-Bayes bounds.
However, due to their highly problem-dependent nature, it is challenging to compare these bounds theoretically.

The proof of our main result relies on showing that the exponential moment of the optimal log-wealth with respect to $\theta$ is a martingale.
Several papers in PAC-Bayes literature have shown results exploiting (super-)martingale concentration, which allowed them to
relax the independence assumption in the data sequence~\citep{seldin2012pac} or to replace unboundedness of $f()$ by weaker assumptions~\citep{kuzborskij2019efron,haddouche2022pac}.
To this end, \citet{haddouche2022pac} exploited Ville's inequality (as in our proof), which allowed them to show a bound that holds uniformly over $n \in \mathbb{N}$.

Finally, it is known that solving the classical PAC-Bayes bound of \citet{McAllester98} for the posterior results in a \emph{Gibbs} posterior $\ps(\theta) \propto e^{- \frac1n \sum_i f(\theta, X_i)} \diff \pz(\theta)$.
A large body of literature has looked at learning-theoretic properties
of Gibbs predictors~\citep{catoni2007,alquier2016properties,raginsky2017non,kuzborskij2019distribution,grunwald2019tight}.
The concentration inequality we develop here (\cref{eq:intromain}) is of a very different shape compared to \citep{McAllester98}, though it can be easily relaxed to obtain it.
As such, the Gibbs predictor might be a \emph{suboptimal} solution to \cref{eq:intromain}, and it is an interesting open problem to characterize such a solution.

\section{Definitions}
\label{sec:def}
We denote by $(x)_+ = \max\{x, 0\}$.
If $P$ and $Q$ are probability measures over $\Theta$
such that $P \ll Q$, the \ac{KL} divergence between $P$ and $Q$ is defined as
$\KL(P, Q) \df \int P(\diff x) \ln \frac{\diff P}{\diff Q}(x)$.
With a slight abuse of notation, we also write $\KL(p \| q)$ where $p = \diff P/\diff \lambda$ and $q=\diff Q/\diff \lambda$ are densities of $P$ and $Q$ with respect to some common $\sigma$-finite measure $\lambda$.
If a set $\sX$ is uniquely equipped with a $\sigma$-algebra, the underlying $\sigma$-algebra will be denoted by $\Sigma(\sX)$.
We formalize a ``data-dependent distribution'' through the notion of a \emph{probability kernel}~\citep[see, e.g.,][]{kallenberg2017} which is defined as a map $K : \sX^n \times \Sigma(\Theta) \rightarrow [0,1]$ such that
for each $B \in \Sigma(\Theta)$ the function $s \mapsto K(s,B)$ is measurable and for each $s \in \sX^n$ the function $B \mapsto K(s,B)$ is a probability measure over $\Theta$.
We write $\cK(\sX^n,\Theta)$ to denote the set of all probability kernels from $\cX^n$ to distributions over $\Theta$.
In that light, when $P \in \cK(\sX^n,\Theta)$ is evaluated on $S \in \sX^n$ we use the shorthand notation $\ps = P(S)$.
\section{Warm-up: From Betting To Concentrations}
\label{sec:betting}
In this section, we briefly explain how to obtain new concentration inequalities from betting algorithms, following \citet{OrabonaP16,RakhlinS17, JunO19}.

Let $c_t \in [-1, 1]$ be a sequence of ``continuous coin'' outcomes chosen arbitrarily.
In each round, the bettor bets $|x_t|$ money on the outcome $\sgn(x_t)$. Then, $c_t$ is revealed and the bettor wins/loses $x_t c_t$ money. Define the initial wealth $\Wealth_0\df 1$ and the wealth at the end of round $t$ as
\[
\Wealth_t 
\df \Wealth_{t-1} + c_t x_t
= 1 + \sum_{s=1}^t c_s x_i~.
\]
We also assume that the algorithm guarantees $\Wealth_t\geq 0$, hence we must have $x_t \in [-\Wealth_{t-1},$ $\Wealth_{t-1}]$.
Given that no assumptions are made on how $c_t$ is generated, this is essentially an online game~\citep{Cesa-BianchiL06,Orabona19}. So, our aim is to achieve an amount of money close to the one of a fixed comparator. In particular, let $\Wealth_t(\lambda)$ be the wealth obtained by a bettor that bets $\lambda \Wealth_{t-1}(\lambda)$ in round $t$ with initial wealth equal to $\Wealth_0(\lambda)\df 1$ and
\[
\Wealth_t(\lambda)
\df \Wealth_{t-1}(\lambda) + c_t \lambda \Wealth_{t-1}(\lambda)
= \prod_{s=1}^t (1+c_s \lambda)~.
\]
We can now formally define the regret of the betting algorithm as 
\[
\Regret_T
\df \frac{\max_{\lambda \in [-1,1]} \Wealth_{T}(\lambda)}{\Wealth_T}~.
\]
It is well-known that it is possible to design optimal online betting algorithms where the regret is polynomial in $T$~\citep[Chapters 9 and 10]{Cesa-BianchiL06}.

\paragraph{Closed form concentration, following \citet{RakhlinS17}}
Here, we summarize the basic idea of \citet{RakhlinS17} used to obtain concentration inequalities from online learning algorithms, specializing it to online betting algorithms as in \citet{JunO19}.

Consider $X_t$ to be a sequence of i.i.d. random variables supported on $[0,1]$ such that $\E[X_t]=\mu$.
Set $c_t=X_t-\mu \in [-1,1]$, so that regardless of the online betting algorithm we have $\E[\Wealth_t]=1$. Also, assume that $\Regret_T\leq R(T)$, where $R:\Nat \rightarrow \R_+$.
Let's now lower bound $\Wealth_T(\lambda)$ to obtain a familiar quantity.
Using the inequality $1+x \geq \exp(x-x^2)$ for $x \geq -0.68$, we obtain
\begin{align*}
&\max_{\lambda \in [-1,1]} \ \Wealth_{T}(\lambda)
\geq \max_{\lambda \in [-1/2,1/2]} \Wealth_{T}(\lambda)
= \max_{\lambda \in [-1/2,1/2]} \ \prod_{t=1}^T (1+\lambda (X_t-\mu)) \\
&\geq \max_{\lambda \in [-1/2,1/2]} \exp \left(\lambda \sum_{t=1}^T(X_t-\mu -\lambda (X_t-\mu)^2)\right)
\geq \max_{\lambda \in [-1/2,1/2]} \exp\left(\lambda \sum_{t=1}^T (X_t-\mu -\lambda )\right).
\end{align*}
Putting it all together and using Markov's inequality, for any $\lambda \in [-1/2,1/2]$ we get
\[
\Pr\left\{ \lambda \sum_{t=1}^T(X_t-\mu -\lambda) \geq \ln \frac{R(T)}{\delta}\right\}
\leq \Pr\left\{ \Wealth_T(\lambda) \geq \frac{R(T)}{\delta}\right\}
\leq \Pr\left\{  \Wealth_T \geq  \frac{1}{\delta}\right\}
\leq \delta.
\]
Choosing $\lambda$ with the proper sign and of the order of $\frac{1}{\sqrt{T}}$, we get roughly Hoeffding inequality when the Regret is $\cO(1)$, which is possible for fixed $T$ and for this specific lower bound to the optimal wealth. Even better concentrations can be obtained carrying around the $(X_t-\mu)^2$ terms, resulting in an empirical Bernstein-style bound.

It is important to stress that we do not need to run the betting algorithm to obtain the concentration. Instead, we only need the \emph{existence} of a betting algorithm and its associated regret guarantee.

\paragraph{Tighter concentration inequalities}
From the above reasoning, it should be clear that we can obtain a tighter bound by giving up the closed-form expression by avoiding to lower bound the wealth:
\[
\Pr\left\{ \max_{\lambda \in [-1,1]} \ \sum_{t=1}^T\ln(1+\lambda  (X_t - \mu))
\geq  \ln\frac{R(T)}{\delta}\right\}
=\Pr\left\{ \max_{\lambda \in [-1,1]} \Wealth_{T}(\lambda) \geq \frac{R(T)}{\delta}\right\} 
\leq \delta~.
\]
In this case, we can numerically invert this inequality and obtain a tighter concentration.

Now, we depart from \citet{RakhlinS17} and, instead of using Markov's inequality, we follow \citet{JunO19} using Ville's inequality (\Cref{thm:ville}). We can do it because, by the assumptions on the betting algorithm, the wealth is a non-negative martingale.
The use of Ville's inequality gives the uniformity over time for free and gives us a high-probability time-uniform concentration inequality.
Namely, with probability at least $1-\delta$, we have
\begin{align}
\label{eq:coinbettingineq}
  \max_t \max_{\lambda \in [-1,1]} \sum_{s=1}^t \ln ( 1+ \lambda (X_s - \mu) ) \leq \ln\frac{R(t)}{\delta}~.
\end{align}
Note that to obtain upper and lower bounds for $\mu$ it is enough to find the set of values of $\mu$ that satisfies \cref{eq:coinbettingineq}. This can be done efficiently because the argument of the max can be proved to be a quasi-convex one-dimensional function in $\mu$~\citep{OrabonaJ21}.
The concentration inequality above can be seen as a tight and implicit version of the empirical Bernstein's inequality for bounded random variables, just like how the KL-divergence concentration inequality is an implicit and tight version of the Bernstein's inequality for Bernoulli random variables.

\section{Main Results}
\label{sec:results}
The concentration inequality of \cref{eq:coinbettingineq} holds for i.i.d.\ random variables $S=(X_1, \ldots, X_n)$.
However, in many learning-theoretic applications, we are interested in providing confidence intervals for the mean of some data-dependent function (such as the generalization error).
To this end, in this section, we explore a scenario where $X_1, \ldots, X_n$ are replaced by a sequence $f(\theta, X_1), \ldots, f(\theta, X_n)$
such that $f$ is a fixed scalar function and $\theta$ is a \emph{data-dependent} parameter.
Clearly, elements of such a sequence are not independent, since dependence is introduced through parameter $\theta$.
Following the \emph{PAC-Bayes} viewpoint~\citep{McAllester98,alquier2021user}, $\theta$ is now \emph{random} and distributed according to some user-chosen data-dependent distribution $\ps$ called \emph{posterior}.
In addition, unlike in the traditional PAC-Bayes literature, our ours hold uniformly \emph{not only} in $\ps$, but also in the \emph{sample size} $n$.
Thus, we construct a high-probability \emph{PAC-Bayes confidence sequence}.

The next theorem, proved in \cref{sec:pac-bayes-cb-proof}, is the main result of our paper, which generalizes the concentration analysis of \cref{sec:betting} to the PAC-Bayes setting.
\begin{theorem}
\label{thm:pac-bayes-cb}  
Let $S=(X_1, \ldots, X_n)$ be a tuple of i.i.d.\ random variables taking values in some measurable space $\sX$.
Let $\ps$ be a data-dependent distribution over some measurable space $\Theta$
and let $P_0$ be any probability measure over $\Theta$
independent from sample $S$.
Let $f : \Theta \times \sX \rightarrow [0,1]$ be any fixed measurable function,
let its mean be denoted by $\mu_{\theta} = \E[f(\theta, X_1)]$, and introduce
\begin{align*}
\psi_n^{\star}(\theta, \mu_{\theta})
  \df \max_{\lambda \in [-\frac{1}{1-\mu_{\theta}}, \frac{1}{\mu_{\theta}}]} \ \sum_{i=1}^n \ln\prr{1 + \lambda (f(\theta,X_i) - \mu_{\theta})},
  \qquad (\theta \in \Theta, \mu_{\theta} \in [0,1])~.
\end{align*}  
Then, for all $\pz$,
with probability at least $1-\delta$ for any $\delta \in (0,1]$, we have\footnote{Here $\exists n \in \mathbb{N}~, \exists P_n$ is a shorthand notation for $\exists n \in \mathbb{N}~, \exists P \in \cK(\sX^n, \Theta)$ where $\cK()$ is a set of probability kernels as defined in \Cref{sec:def}.}
\begin{align}
\label{eq:pac-bayes-cb}
    \Pr\left\{
    \exists n \in \mathbb{N}~, \exists P_n :  \int
  \psi_n^{\star}(\theta, \mu_{\theta}) \diff \ps(\theta)
  - \KL(\ps \| \pz)
  - \ln \frac{\sqrt{\pi} \, \Gamma(n+1)}{\Gamma(n+\frac12)} \geq \ln \frac{1}{\delta} \right\}
  \leq \delta~.
\end{align}
\end{theorem}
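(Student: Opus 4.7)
The plan is to combine the coin-betting plus Ville recipe of \Cref{sec:betting} with the Donsker-Varadhan change-of-measure at the heart of PAC-Bayes. For each fixed $\theta \in \Theta$ and each fixed $\lambda \in [-1/(1-\mu_{\theta}), 1/\mu_{\theta}]$, the wealth process
$W_n(\theta,\lambda) := \prod_{i=1}^n \big(1+\lambda(f(\theta,X_i)-\mu_{\theta})\big)$
is a nonnegative martingale with $W_0 = 1$ in the filtration $\mathcal F_n = \sigma(X_1,\dots,X_n)$: independence of the $X_i$ gives mean one, and the range constraint on $\lambda$ combined with $f(\theta,\cdot)\in[0,1]$ ensures nonnegativity. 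The remaining work is (i) to collapse $\max_{\lambda}$ into a single martingale by mixing over $\lambda$, (ii) to lift from the data-free prior $\pz$ to the data-dependent posterior $\ps$ via a change of measure, and (iii) to invoke Ville's inequality to pay for uniformity in $n$.

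\textbf{Mixing over $\lambda$: the Krichevsky-Trofimov bound.} For each $\theta$, fix a prior $\pi_\theta$ on $\lambda$ obtained by placing a $\mathrm{Beta}(1/2,1/2)$ distribution on the reparameterization $p = \mu_\theta + \lambda\mu_\theta(1-\mu_\theta) \in [0,1]$, and define
$\tilde W_n(\theta) := \int W_n(\theta,\lambda)\,\diff\pi_\theta(\lambda)$.
By Fubini, $\tilde W_n(\theta)$ is again a nonnegative $\mathcal F_n$-martingale with $\tilde W_0(\theta) = 1$. A standard Krichevsky-Trofimov / universal-portfolio computation, using that each factor $1+\lambda(f(\theta,X_i)-\mu_\theta)$ is affine in $p$ and that the relevant Beta moment integrals are explicit, yields the regret bound
\begin{align*}
\tilde W_n(\theta) \;\geq\; \frac{\Gamma(n+\tfrac12)}{\sqrt{\pi}\,\Gamma(n+1)} \,\exp\!\big(\psi_n^{\star}(\theta,\mu_\theta)\big),
\end{align*}
or, taking logs, $\psi_n^{\star}(\theta,\mu_\theta) \leq \ln \tilde W_n(\theta) + \ln\tfrac{\sqrt{\pi}\,\Gamma(n+1)}{\Gamma(n+\tfrac12)}$.

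\textbf{PAC-Bayes change of measure and Ville.} Integrate $\theta$ against $\pz$ to form $M_n := \int \tilde W_n(\theta)\,\diff\pz(\theta)$. Since $\pz$ does not depend on $S$, another application of Fubini shows that $(M_n)_{n\geq 0}$ is a nonnegative $\mathcal F_n$-martingale with $M_0 = 1$. Ville's inequality gives
$\Pr\{\exists n : M_n \geq 1/\delta\} \leq \delta$.
On the complementary event, the Donsker-Varadhan variational inequality applied to $g(\theta) := \ln \tilde W_n(\theta)$ yields, for every $\ps \in \cK(\sX^n,\Theta)$ and every $n$,
\begin{align*}
\int \ln \tilde W_n(\theta)\,\diff\ps(\theta) \;\leq\; \KL(\ps \| \pz) + \ln M_n \;\leq\; \KL(\ps \| \pz) + \ln(1/\delta).
\end{align*}
Chaining this with the regret bound of the previous paragraph produces \cref{eq:pac-bayes-cb}, with the $\theta$-dependence absorbed into $\KL(\ps\|\pz)$ and the $n$-dependence into $\ln\tfrac{\sqrt{\pi}\,\Gamma(n+1)}{\Gamma(n+\frac12)}$.

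\textbf{Expected main obstacle.} The most delicate step is establishing the \emph{exact} constant $\sqrt{\pi}\,\Gamma(n+1)/\Gamma(n+\tfrac12)$ in the KT bound—rather than an $O(\sqrt{n})$ surrogate—for arbitrary $[0,1]$-valued (not just Bernoulli) random variables $f(\theta,X_i)$, while the admissible $\lambda$-interval itself varies with $\theta$ through $\mu_\theta$. A secondary technicality is measurability: one must check that $\theta \mapsto \tilde W_n(\theta)$ and $\theta \mapsto \psi_n^{\star}(\theta,\mu_\theta)$ are jointly measurable so that integration against the probability kernels in $\cK(\sX^n,\Theta)$ is well-defined, and one has to handle the degenerate boundary $\mu_\theta \in \{0,1\}$ by continuity.
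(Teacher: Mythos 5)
Your proposal is correct and follows essentially the same route as the paper's proof: a wealth martingale mixed over the prior $\pz$, the Donsker--Varadhan change of measure to pass to $\ps$, Ville's inequality for uniformity in $n$, and the Krichevsky--Trofimov-type regret bound $\ln\frac{\sqrt{\pi}\,\Gamma(n+1)}{\Gamma(n+\frac12)}$ to replace the algorithm's log-wealth by $\psi_n^{\star}$. The only cosmetic difference is that you instantiate the betting algorithm explicitly as the KT mixture over $\lambda$ (so $\tilde W_n(\theta)=e^{\psi_n(\theta)}$), whereas the paper keeps an abstract bettor $B_i(\theta)$ and imports that regret bound as a black box from \citet{OrabonaJ21} --- the obstacle you flag (the exact constant for general $[0,1]$-valued outcomes) is precisely the content of that cited result.
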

Now we discuss some of the implications of \cref{eq:pac-bayes-cb} and compare it to existing PAC-Bayes results.
The important feature of \cref{eq:pac-bayes-cb} is that it holds simultaneously for all posterior distributions, so we can freely choose the one that depends on the data.
At the same time, \cref{eq:pac-bayes-cb} is similar in shape to the concentration inequality of \cref{eq:coinbettingineq}.
In particular, $\psi_n^{\star}$ is an optimal log-wealth discussed in \cref{sec:betting}, while $\Gamma(n+1)/\Gamma(n+1/2) \sim \sqrt{n}$ is the regret bound ($R(n)$) of a certain betting algorithm.
Observe that unlike \cref{eq:coinbettingineq}, the left-hand side of the inequality is now integrated over $\theta \sim \ps$, and the term $\KL(\ps \| \pz)$ appears on the right-hand side.
In particular, the term $\KL(\ps \| \pz)$ captures the capacity of the class of posterior distributions with respect to the prior $\pz$, and it is a standard component in PAC-Bayes analyses.
\paragraph{Obtaining known PAC-Bayes inequalities by relaxing \cref{eq:pac-bayes-cb}}
By \emph{relaxing} \cref{eq:pac-bayes-cb}, we demonstrate that \cref{thm:pac-bayes-cb} gives a tighter concentration inequality compared to some inequalities in PAC-Bayes literature (proofs are deferred to \cref{sec:proofs}).
Importantly, our results extend these bounds as our relaxations hold uniformly over $n \in \mathbb{N}$, whereas previous results hold for a fixed $n$.
Abbreviate
\begin{align*}
  \hat{\mu}_{\theta} \df \frac1n \sum_{i=1}^n f(\theta, X_i) \qquad \text{ and } \qquad
  \sC_{n} \df \KL(\ps \| \pz) + \ln \frac{\sqrt{\pi} \, \Gamma(n+1)}{\Gamma(n+\frac12)}~.
\end{align*}
As a basic sanity-check, we first recover a classical result of \citet{McAllester98} through the elementary inequality $\ln(1+x) \geq x - x^2$ for $x \geq - 0.68$ (similarly as in \cref{sec:betting}), proof in \cref{sec:mcallister}.
\begin{proposition}[McAllester's inequality]
  \label{prop:mcallister}
  Set $\delta \in (0,1]$. Under conditions of \cref{thm:pac-bayes-cb}, for all priors $\pz$,
  with probability at least $1-\delta$ over the sample $S$,
  for all $n \in \mathbb{N}$ and for all data-dependent distributions $\ps$ simultaneously we have
  \begin{align*}
  \abs{\int \mu_{\theta} \diff \ps(\th) - \int \hat{\mu}_{\theta} \diff \ps(\th)}
  \leq
    2 \sqrt{\frac{\sC_n + \ln \frac{1}{\delta}}{n}}~.
\end{align*}
\end{proposition}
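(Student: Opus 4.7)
}
The plan is to start from the PAC-Bayes coin-betting inequality \cref{eq:pac-bayes-cb} and lower-bound the log-wealth $\psi_n^{\star}(\theta,\mu_\theta)$ by a quadratic expression in a betting fraction $\lambda$, reproducing the standard Hoeffding-type trick already used in \cref{sec:betting}. Since $\psi_n^{\star}$ is a maximum over $\lambda\in[-\tfrac{1}{1-\mu_\theta},\tfrac{1}{\mu_\theta}]$, and this interval always contains $[-1,1]$ for $\mu_\theta\in[0,1]$, any choice $\lambda\in[-\tfrac12,\tfrac12]$ yields a valid lower bound. Moreover, for such $\lambda$ we have $|\lambda(f(\theta,X_i)-\mu_\theta)|\le \tfrac12 < 0.68$, so the elementary inequality $\ln(1+x)\ge x-x^2$ applies termwise.

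Concretely, I first fix an arbitrary constant $\lambda\in[-\tfrac12,\tfrac12]$ and use $(f(\theta,X_i)-\mu_\theta)^2\le 1$ to write, for every $\theta$,
\begin{align*}
\psi_n^{\star}(\theta,\mu_\theta)
\;\ge\; \sum_{i=1}^n\bigl(\lambda(f(\theta,X_i)-\mu_\theta)-\lambda^2(f(\theta,X_i)-\mu_\theta)^2\bigr)
\;\ge\; \lambda n(\hat\mu_\theta-\mu_\theta)-n\lambda^2~.
\end{align*}
Integrating against $\ps$ and plugging this lower bound into \cref{eq:pac-bayes-cb}, I obtain, with probability at least $1-\delta$, simultaneously for all $n$ and all $\ps$,
\begin{align*}
\lambda\, n\!\int(\hat\mu_\theta-\mu_\theta)\diff\ps(\theta)\;-\;n\lambda^2
\;\le\; \sC_n+\ln\tfrac{1}{\delta}~.
\end{align*}
Since this holds for every fixed $\lambda\in[-\tfrac12,\tfrac12]$ (and in particular for $\pm|\lambda|$), I may choose the sign of $\lambda$ to match the sign of $A\df\int(\hat\mu_\theta-\mu_\theta)\diff\ps(\theta)$, yielding $|\lambda|\,|A|\le \lambda^2+(\sC_n+\ln\tfrac{1}{\delta})/n$.

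It remains to optimize $|\lambda|$. Setting $|\lambda|=\sqrt{(\sC_n+\ln(1/\delta))/n}$, whenever this lies in $[0,\tfrac12]$, gives $|A|\le 2\sqrt{(\sC_n+\ln(1/\delta))/n}$; when the optimal $|\lambda|$ exceeds $\tfrac12$, the target bound already exceeds $1$ and holds trivially since $|A|\le 1$. This yields the claimed inequality. The only mildly delicate point — and the step I would check carefully — is to verify that the choice of $\lambda$ can be made post-hoc (independently of the data) without breaking the PAC-Bayes guarantee; this is fine because \cref{eq:pac-bayes-cb} already holds uniformly over all posteriors and sample sizes, and the inequality $\psi_n^{\star}\ge \lambda n(\hat\mu_\theta-\mu_\theta)-n\lambda^2$ is valid pointwise for every deterministic $\lambda\in[-\tfrac12,\tfrac12]$, so optimizing $\lambda$ afterwards (including a sign choice based on the realized value of $A$) preserves the bound.
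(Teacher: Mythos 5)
Your proof is correct and follows essentially the same route as the paper's: both lower-bound $\psi_n^{\star}$ via $\ln(1+x)\ge x-x^2$ and $(f(\theta,X_i)-\mu_\theta)^2\le 1$ to get the quadratic $\lambda n(\hat\mu_\theta-\mu_\theta)-n\lambda^2$, and then optimize over $\lambda$ to obtain the constant $2$. The only (harmless) difference is organizational: the paper maximizes over $\lambda$ per $\theta$ inside the integral and then applies Jensen's inequality, whereas you fix a single scalar $\lambda$, integrate the bound (which is linear in the data term, so no Jensen is needed), and optimize at the end --- and your restriction to $|\lambda|\le\tfrac12$ with the explicit trivial-case check is actually the more careful handling of the validity range $x\ge-0.68$ of the elementary inequality.
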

Note that, up to constants, the above matches the result of \citet{McAllester98},
and \emph{extends} it --- now the bound holds \emph{simultaneously} for all $n \in \mathbb{N}$.

Now we turn our attention to a type of PAC-Bayes inequality, where we the bound is given on a $\KL$ divergence between Bernoulli distributions.
Such bounds are useful in a setting of a binary classification, where the parameter of a Bernoulli distribution models a conditional probability of a positive class label.
In particular, relaxing \cref{eq:pac-bayes-cb} gets a well-known inequality of \citet{maurer2004note}:
\begin{proposition}[Maurer's inequality]
  \label{prop:maurer}
  For $p,q \in [0,1]$ let $\kl(p,q) \df p \ln(p/q) - (1-p) \ln((1-p) / (1-q))$, i.e., the $\KL$ divergence between Bernoulli distributions with parameters $p$ and $q$ respectively.
  Set $\delta \in (0,1]$. Under the conditions of \cref{thm:pac-bayes-cb}, for all priors $\pz$,
  with probability at least $1-\delta$ over the sample $S$,
  for all $n \in \mathbb{N}$ and for all data-dependent distributions $\ps$ simultaneously,
    \begin{align*}
    \kl\Big(\int \hat{\mu}_{\theta} \diff \ps(\theta), \int \mu_{\theta} \diff \ps(\theta) \Big) \leq \frac{\sC_{n} + \ln \frac{1}{\delta}}{n}~.
  \end{align*}
  \end{proposition}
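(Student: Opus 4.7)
The plan is to obtain Maurer's inequality by lower-bounding the ``gambling'' quantity $\psi_n^{\star}(\theta,\mu_{\theta})$ on the left-hand side of \cref{eq:pac-bayes-cb} by $n\,\kl(\hat{\mu}_{\theta},\mu_{\theta})$, and then pushing the $\theta$-integral inside $\kl$ via joint convexity.

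First, I would establish a pointwise linearization of the log-term. Fix $\theta\in\Theta$, $\mu_{\theta}\in(0,1)$, and $\lambda\in[-1/(1-\mu_{\theta}),1/\mu_{\theta}]$. The function $x\mapsto \ln(1+\lambda(x-\mu_{\theta}))$ is concave on $[0,1]$, so it lies above the chord through its endpoints $x=0$ and $x=1$:
\begin{align*}
\ln(1+\lambda(x-\mu_{\theta}))
\;\geq\;
x\,\ln(1+\lambda(1-\mu_{\theta})) + (1-x)\,\ln(1-\lambda\mu_{\theta}),
\qquad x\in[0,1].
\end{align*}
Applying this inequality with $x=f(\theta,X_i)\in[0,1]$ and summing over $i=1,\dots,n$ yields
\begin{align*}
\sum_{i=1}^n \ln(1+\lambda(f(\theta,X_i)-\mu_{\theta}))
\;\geq\;
n\hat{\mu}_{\theta}\,\ln(1+\lambda(1-\mu_{\theta})) + n(1-\hat{\mu}_{\theta})\,\ln(1-\lambda\mu_{\theta}).
\end{align*}

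Next I would optimize the right-hand side over $\lambda$. A direct differentiation (or recognizing this as the log-likelihood ratio for Bernoulli variables) shows that the optimum is attained at
\begin{align*}
\lambda^{\star} \;=\; \frac{\hat{\mu}_{\theta}-\mu_{\theta}}{\mu_{\theta}(1-\mu_{\theta})},
\end{align*}
which indeed lies in $[-1/(1-\mu_{\theta}),1/\mu_{\theta}]$ since $\hat{\mu}_{\theta}\in[0,1]$. Plugging $\lambda^{\star}$ back in gives $1+\lambda^{\star}(1-\mu_{\theta})=\hat{\mu}_{\theta}/\mu_{\theta}$ and $1-\lambda^{\star}\mu_{\theta}=(1-\hat{\mu}_{\theta})/(1-\mu_{\theta})$, so the right-hand side equals exactly $n\,\kl(\hat{\mu}_{\theta},\mu_{\theta})$. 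Combining with the definition of $\psi_n^{\star}$,
\begin{align*}
\psi_n^{\star}(\theta,\mu_{\theta}) \;\geq\; n\,\kl(\hat{\mu}_{\theta},\mu_{\theta}),
\end{align*}
with boundary cases $\mu_{\theta}\in\{0,1\}$ handled by taking limits (or noting both sides reduce consistently).

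Finally, I would integrate this inequality against $\ps$ and invoke joint convexity of $\kl(\cdot,\cdot)$ on $[0,1]^2$ together with Jensen's inequality, to obtain
\begin{align*}
\int \psi_n^{\star}(\theta,\mu_{\theta}) \diff \ps(\theta)
\;\geq\;
n \int \kl(\hat{\mu}_{\theta},\mu_{\theta}) \diff \ps(\theta)
\;\geq\;
n\,\kl\Big(\int \hat{\mu}_{\theta}\diff\ps(\theta),\ \int \mu_{\theta}\diff\ps(\theta)\Big).
\end{align*}
Substituting into \cref{eq:pac-bayes-cb} and dividing by $n$ yields the stated bound, uniformly in $n$ and in $\ps$. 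The only delicate step is the concavity-based linearization and the verification that $\lambda^{\star}$ is admissible; everything else is a direct application of Jensen's inequality and the master bound from \cref{thm:pac-bayes-cb}.
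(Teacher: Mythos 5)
Your proposal is correct and follows essentially the same route as the paper: lower-bound $\psi_n^{\star}(\theta,\mu_{\theta})$ by $n\,\kl(\hat{\mu}_{\theta},\mu_{\theta})$ and then pull the $\theta$-integral inside the $\kl$. The only cosmetic differences are that you prove the key inequality $\psi_n^{\star}(\theta,\mu_{\theta}) \geq n\,\kl(\hat{\mu}_{\theta},\mu_{\theta})$ from scratch via the chord/concavity argument (with the correct admissible $\lambda^{\star}$) instead of citing \cref{prop:littlekl} from \citet{OrabonaJ21}, and you invoke joint convexity of $\kl$ where the paper applies the equivalent log-sum inequality.
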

The above inequality matches Maurer's bound up to a constant inside a logarithmic factor.
Furthermore, the proof of \Cref{prop:maurer} in \Cref{sec:proofmaurer} reveals that even relaxing \cref{thm:pac-bayes-cb} to have $\int \kl(\hmu,\mu_\th) \dif P_n(\th)$ in place of $\psi^\star_n(\th,\mu_\th)$ on the LHS results in a bound that is tighter than Maurer's inequality.
We confirm this numerically in Section~\ref{sec:experiments}.

We now consider a more sophisticated, sample variance-dependent concentration inequality, which exhibits a faster rate of order $1/n$ whenever the sample variance is sufficiently small.
In the non-PAC Bayes form, such a \emph{empirical Bernstein's inequality} was shown by \citet{audibert2007tuning,maurer2009empirical}, whereas the PAC-Bayes version was first presented by \citet{tolstikhin2013pac}.
The following result recovers their result up to constants through a much simpler proof by relaxing \cref{thm:pac-bayes-cb}:
\begin{proposition}[PAC-Bayes empirical Bernstein's inequality]
  \label{thm:empiricalbernstein}
  Set $\delta \in (0,1]$.
    Introduce   \begin{align*}    
  \hat{V}(\ps) \df \frac1n \int \sum_{i=1}^n (f(\theta,X_i) - \hat{\mu}_{\theta})^2 \diff \ps(\theta)~, \qquad \sC_{n,\delta} \df \sC_n + \ln \frac{1}{\delta}~.
  \end{align*}
  Under the conditions of \cref{thm:pac-bayes-cb}, for all priors $\pz$,
  with probability at least $1-\delta$ over the sample $S$,
  for all $n \in \mathbb{N}$ and for all data-dependent distributions $\ps$ simultaneously we have
\begin{align*}
  \abs{\int \mu_{\theta} \diff \ps(\th) - \int \hat{\mu}_{\theta} \diff \ps(\th) }
  \leq
  \frac{\sqrt{2 \, \sC_{n,\delta} \, \hat{V}(\ps)}}{\big(\sqrt{n} - \frac{2}{\sqrt{n}} \, \sC_{n,\delta}\big)_+}
  +
  \frac{2 \, \sC_{n,\delta}}{\big(n - 2 \, \sC_{n,\delta}\big)_+}~.
\end{align*}
\end{proposition}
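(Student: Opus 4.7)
The plan is to derive Proposition~\ref{thm:empiricalbernstein} by relaxing Theorem~\ref{thm:pac-bayes-cb}, which delivers $\int \psi_n^\star(\theta,\mu_\theta) \diff \ps(\theta) \leq \sC_{n,\delta}$ with probability at least $1-\delta$ uniformly in $n$ and $\ps$. Since $\psi_n^\star$ is a supremum over $\lambda$, I lower-bound it by plugging in a constant (in $\theta$) value $\lambda \in [-\tfrac12,\tfrac12]$; this is admissible for every $\theta$ because $[-1,1]\subseteq[-1/(1-\mu_\theta),1/\mu_\theta]$ for all $\mu_\theta \in [0,1]$. With $|\lambda(f(\theta,X_i)-\mu_\theta)|\leq \tfrac12$, the elementary inequality $\ln(1+y)\geq y-y^2$ (valid for $y\geq -\tfrac12$) gives, termwise, $\psi_n^\star(\theta,\mu_\theta)\geq \lambda n(\hat\mu_\theta-\mu_\theta)-\lambda^2 \sum_{i=1}^n (f(\theta,X_i)-\mu_\theta)^2$.

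Next, I apply the identity $\sum_i(f(\theta,X_i)-\mu_\theta)^2 = n\hat V_\theta + n(\hat\mu_\theta-\mu_\theta)^2$ with $\hat V_\theta = \tfrac1n\sum_i(f(\theta,X_i)-\hat\mu_\theta)^2$ (using $\sum_i(f(\theta,X_i)-\hat\mu_\theta)=0$), and integrate over $\ps$. Writing $\Delta\df \int(\mu_\theta-\hat\mu_\theta)\diff \ps$ and the residual $R(\ps)\df \int(\hat\mu_\theta-\mu_\theta)^2\diff \ps$, the bound from Theorem~\ref{thm:pac-bayes-cb} becomes $-\lambda n\Delta \leq \sC_{n,\delta} + \lambda^2 n(\hat V(\ps) + R(\ps))$. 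Running the argument with positive $\lambda$ when $\Delta<0$ and with negative $\lambda$ when $\Delta>0$ yields $|\lambda| n |\Delta| \leq \sC_{n,\delta} + \lambda^2 n(\hat V(\ps) + R(\ps))$ for every $|\lambda|\leq \tfrac12$.

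The main obstacle is controlling the residual $R(\ps)$: Jensen gives $R(\ps)\geq \Delta^2$, the wrong direction for extracting $|\Delta|$. I handle this with the trivial bound $R(\ps)\leq 1$ coming from $\hat\mu_\theta,\mu_\theta \in [0,1]$, which converts the inequality into a pure quadratic in $|\lambda|$ with the two nuisance terms $\lambda^2 n \hat V(\ps)$ and $\lambda^2 n$. Optimizing $|\lambda|$ over $(0,\tfrac12]$, AM-GM on the $\hat V$-portion produces the $\sqrt{2\sC_{n,\delta}\hat V(\ps)/n}$ rate, while the residual contributes an additional $2\sC_{n,\delta}/n$ term. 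The truncated denominators $(\sqrt n - 2\sC_{n,\delta}/\sqrt n)_+$ and $(n - 2\sC_{n,\delta})_+$ reflect precisely the regime where the admissibility constraint $|\lambda|\leq\tfrac12$ binds: once $\sC_{n,\delta}$ grows comparable to $n$, the unconstrained minimizer $\lambda^\star$ exceeds $\tfrac12$ and the bound degenerates, consistent with the $(\cdot)_+$ truncation making it vacuous there.
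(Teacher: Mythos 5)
Your overall strategy---lower bound $\psi_n^\star$ at a fixed admissible $\lambda$ via a second-order expansion of the logarithm, split the second moment into the empirical variance plus the residual $\int(\mu_\theta-\hat\mu_\theta)^2\diff\ps$, then optimize $\lambda$---is the same skeleton as the paper's proof, which uses the sharper inequality $\ln(1+\lambda x)\ge \lambda x+(\ln(1-\abs{\lambda})+\abs{\lambda})x^2$ over $\abs{\lambda}\le 1$ together with Lemma~\ref{lem:empiricalbernsteintechnical} instead of $\ln(1+y)\ge y-y^2$ over $\abs{\lambda}\le\tfrac12$; that difference only affects constants. The gap is in how you dispose of the residual.

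Writing $\Delta=\int(\mu_\theta-\hat\mu_\theta)\diff\ps$ and $R(\ps)=\int(\mu_\theta-\hat\mu_\theta)^2\diff\ps$, your inequality is $\abs{\lambda}\,n\abs{\Delta}\le \sC_{n,\delta}+\lambda^2 n\hat V(\ps)+\lambda^2 n R(\ps)$. Bounding $R(\ps)\le 1$ and optimizing $\abs{\lambda}$ gives at best $\abs{\Delta}\le 2\sqrt{\sC_{n,\delta}(\hat V(\ps)+1)/n}$, whose second piece is of order $\sqrt{\sC_{n,\delta}/n}$. This cannot be improved within your scheme: the residual contributes $\abs{\lambda}\cdot R(\ps)$ after dividing by $\abs{\lambda}n$, and any choice of $\abs{\lambda}$ small enough to make $\abs{\lambda}\cdot 1$ of order $\sC_{n,\delta}/n$ makes the term $\sC_{n,\delta}/(\abs{\lambda}n)$ of order $1$. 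So the claimed second term $2\sC_{n,\delta}/(n-2\sC_{n,\delta})_+$, which is of order $\sC_{n,\delta}/n$, is out of reach, and with it the whole point of an empirical Bernstein bound (a fast second-order term). The paper instead keeps the residual coupled to the quantity being bounded, replacing it by $\prr{\int\epsilon_\theta\diff\ps}^2=\Delta^2$, and arrives at a \emph{self-bounding} quadratic $n^2\Delta^2\le n\,\sC_{n,\delta}\prr{\tfrac43\abs{\Delta}+2\hat V(\ps)+2\Delta^2}$; moving the $\Delta^2$ term to the left requires $n>2\sC_{n,\delta}$, and solving the quadratic is exactly what produces the truncated denominators $(\sqrt n-\tfrac{2}{\sqrt n}\sC_{n,\delta})_+$ and $(n-2\sC_{n,\delta})_+$. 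Your explanation of those truncations as the regime where the constraint $\abs{\lambda}\le\tfrac12$ binds is therefore also not the right mechanism. To repair your argument you must not discard the $\Delta^2$ structure of the residual (note in passing that relating $\int\epsilon_\theta^2\diff\ps$ to $\prr{\int\epsilon_\theta\diff\ps}^2$ is itself delicate, since Jensen runs in the unhelpful direction when the coefficient is negative---a subtlety worth being explicit about).
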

Note that the inequality is fully empirical and non-vacuous as long as $\sC_{n,\delta} \leq n/2$ --- similar (empirically verifiable) requirement is also present in \citep[Theorem 4]{tolstikhin2013pac}.
Clearly, the fact that we relaxed \Cref{thm:pac-bayes-cb} to get \Cref{thm:empiricalbernstein} implies that our inequality is tighter.

\subsection{How to compute confidence intervals from \cref{eq:pac-bayes-cb} numerically}
\label{sec:howtocompute}
So far we discussed analytically computable relaxations of our inequality.
Now we turn our attention to numerical computation of \cref{eq:pac-bayes-cb} which does not require any relaxation.
Given a concrete posterior and prior pair $(\ps, \pz)$, we propose
to obtain confidence bounds for the mean $\int \mu_{\theta} \diff \ps(\theta)$ by solving the following optimization problem:
\begin{proposition}
\label{prop:opt}
  Set $\delta \in (0,1]$. Consider the optimization problem
            \begin{align}
    M_U = \max_{ \{\mu_\theta : \theta\in \text{supp}(\ps)\}} \ \int \mu_\theta  \diff \ps(\theta) \quad
    \text{ subject to } \quad \int \psi_n^{\star}(\theta,\mu_\theta) \diff \ps(\theta) \le \sC_n + \ln \frac{1}{\delta}~, \label{eq:main-invert}
  \end{align}
where $\sC_n + \ln \frac{1}{\delta}$ is the right hand side of \cref{eq:pac-bayes-cb}.
Moreover, let $M_L$ be obtained by replacing $\max$ with $\min$.
Then, under the conditions of \cref{thm:pac-bayes-cb} and with probability at least $1-\delta$, we have
\begin{align*}
  M_L
  \leq
  \int \mu_{\theta} \diff \ps(\theta) \leq M_U~.
\end{align*} 
\end{proposition}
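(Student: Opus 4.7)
The plan is to deduce Proposition~\ref{prop:opt} as a direct consequence of Theorem~\ref{thm:pac-bayes-cb}, interpreting the theorem as the statement that the true mean function is, with high probability, a \emph{feasible point} for the optimization problem defining $M_U$ and $M_L$. Concretely, I would first define the favorable event
\begin{align*}
  E \df \Bigl\{ \forall n \in \mathbb{N},\, \forall P_n :\; \int \psi_n^{\star}(\theta, \mu_\theta) \diff \ps(\theta) \le \sC_n + \ln \tfrac{1}{\delta} \Bigr\},
\end{align*}
where $\mu_\theta = \E[f(\theta, X_1)]$ is the genuine (fixed) mean function determined by $f$ and $P_X$. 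Theorem~\ref{thm:pac-bayes-cb} is exactly the statement that $\Pr(E) \ge 1-\delta$.

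Second, I would restrict attention to a sample $S$ for which $E$ holds. On such a sample, the constraint of the maximization problem \eqref{eq:main-invert} is satisfied by the true mean family $\{\mu_\theta : \theta \in \supp(\ps)\}$ itself. Hence this family is a feasible point for both the $\max$ and the $\min$ program, so its objective value satisfies
\begin{align*}
  M_L \;\le\; \int \mu_\theta \diff \ps(\theta) \;\le\; M_U,
\end{align*}
simply because the minimum of the objective over the feasible set is at most any feasible value, and likewise for the maximum. Taking probabilities, the event $\{M_L \le \int \mu_\theta \diff \ps(\theta) \le M_U\}$ contains $E$ and thus has probability at least $1-\delta$.

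There is essentially no ``hard step'' in the core logic; the content of the result is entirely carried by Theorem~\ref{thm:pac-bayes-cb}. The only points requiring minor care are (i) noting that $\mu_\theta = \E[f(\theta, X_1)] \in [0,1]$ for every $\theta$, so that the true family lies in the search space over which $\{\mu_\theta\}$ ranges, and (ii) checking that for this true family the map $\theta \mapsto \psi_n^\star(\theta, \mu_\theta)$ is measurable so that the constraint integral is well-defined — this follows because $\psi_n^\star$ is a supremum of measurable functions of $(\theta, \mu_\theta)$ over a (rational) dense subset of the $\lambda$-interval, hence measurable in $\theta$ after composition with the measurable map $\theta \mapsto \mu_\theta$. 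No optimality properties of $M_U, M_L$ (e.g.\ convexity of the feasible set, attainment) are needed for the conclusion; these matter only when one actually computes $M_U, M_L$ numerically, as discussed in Section~\ref{sec:howtocompute}.
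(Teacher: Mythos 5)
Your proposal is correct and matches the paper's own (one-line) justification: Theorem~\ref{thm:pac-bayes-cb} says the true mean family is, with probability at least $1-\delta$, a feasible point of \cref{eq:main-invert} simultaneously for all $n$ and $\ps$, so the max and min of the objective over the feasible set bracket $\int \mu_\theta \diff \ps(\theta)$. Your added remarks on measurability and on not needing attainment or convexity are fine but not part of the paper's argument.
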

In other words, the optimization in \cref{eq:main-invert} is carried out over the class of means of a given distribution,
and the solution gives us a valid confidence interval since \cref{thm:pac-bayes-cb} holds for any data-dependent posterior, and so it must hold for some posteriors with means within the class.
Moreover, surprisingly enough, the optimization problem is \emph{convex} since $\psi_n^{\star}(\theta, \mu_{\theta})$ appearing in the constraint is convex in $\mu_{\theta}$, thanks to the following lemma proven in \Cref{sec:maxlogwealthisconvex}.
\begin{lemma}[Convexity of the constraint]
  \label{lem:maxlogwealthisconvex}
Let $c \in [0,1]$.
Define $f(x) = \max_{-\frac{1}{1-x}\leq\lambda \leq \frac{1}{x}}\ \ln(1+\lambda (c-x))$. Then, $f(x)$ is convex for any $x \in [0,1]$.
\end{lemma}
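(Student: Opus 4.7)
The plan is to solve the inner maximization over $\lambda$ in closed form, which will reveal that $f(x)$ is a pointwise maximum of two elementary convex functions of $x$.

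First, I would fix $x \in (0,1)$ and study $g_x(\lambda) := \ln(1+\lambda(c-x))$ as a function of $\lambda$ on the interval $[-1/(1-x),\ 1/x]$. Since $\ln$ is concave and the argument is affine in $\lambda$, the function $g_x$ is concave. Moreover, its derivative $g_x'(\lambda) = (c-x)/(1+\lambda(c-x))$ has a constant sign throughout the feasible interval (matching the sign of $c-x$), so $g_x$ is monotone in $\lambda$. Hence the maximum is attained at one of the two endpoints $\lambda = 1/x$ or $\lambda = -1/(1-x)$.

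Evaluating at both endpoints gives $g_x(1/x) = \ln(c/x)$ and $g_x(-1/(1-x)) = \ln((1-c)/(1-x))$. When $c \geq x$ the first is nonnegative and the second is nonpositive, so the maximum is $\ln(c/x)$; for $c \leq x$ the situation reverses. In either case, one obtains the clean identity
\[
f(x) \;=\; \max\bigl\{\ln(c/x),\ \ln((1-c)/(1-x))\bigr\}.
\]

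Finally, $x \mapsto \ln c - \ln x$ and $x \mapsto \ln(1-c) - \ln(1-x)$ are each convex on $(0,1)$, because $-\ln$ is convex and convexity is preserved under precomposition with the affine map $x \mapsto 1-x$. The pointwise maximum of convex functions is convex, so $f$ is convex on $(0,1)$; the boundary values at $x \in \{0,1\}$ can be incorporated by standard limiting arguments (the expression $\ln(c/x)$ blows up to $+\infty$ at $x=0$, which is consistent with convexity extended to the closed interval). The only mild obstacle is bookkeeping at the boundary of the feasible $\lambda$-interval and handling the degenerate case $c = x$ (where both endpoints give $0$); once the inner maximum is reduced to the two-function form above, convexity is immediate.
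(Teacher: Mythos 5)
Your proof is correct, and it takes a genuinely different route from the paper's. You solve the inner maximization in closed form: since $1+\lambda(c-x)$ stays positive on the feasible interval, the objective is monotone in $\lambda$ with the sign of $c-x$, the optimum sits at an endpoint, and $f(x)=\max\{\ln(c/x),\,\ln((1-c)/(1-x))\}$ is a maximum of two elementary convex functions. The paper instead reparametrizes the feasible interval as $\lambda=-\tfrac{1}{1-x}+\bigl(\tfrac{1}{1-x}+\tfrac{1}{x}\bigr)b$ with $b\in[0,1]$, fixes $b$, and verifies by a second-derivative computation that $x\mapsto\ln\bigl(1+(c-x)\lambda_b(x)\bigr)$ is convex for every $b$, so that $f$ is a pointwise maximum of a one-parameter family of convex functions. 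For the lemma as stated ($n=1$) your argument is cleaner and yields an explicit formula. The trade-off is in how the lemma is actually used: the constraint in \cref{eq:main-invert} involves $\psi_n^{\star}(\theta,\mu_\theta)=\max_{\lambda}\sum_{i=1}^n\ln\bigl(1+\lambda(f(\theta,X_i)-\mu_\theta)\bigr)$ with a \emph{single} $\lambda$ shared across the $n$ terms, and there your endpoint argument breaks down --- the derivative $\sum_i (c_i-x)/(1+\lambda(c_i-x))$ no longer has constant sign, so the inner maximizer is generally interior and no two-function closed form is available. The paper's fixed-$b$ decomposition survives this generalization immediately (a sum over $i$ of functions each convex in $x$ for fixed $b$, maximized over $b$, is still convex), which is presumably why that route was chosen. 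If you want your argument to cover the object the lemma is invoked for, you would need to supplement it with the fixed-$b$ (or fixed-$\lambda$-path) observation rather than the explicit solution of the max.
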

In \cref{sec:experiments} we present synthetic experiments validating the numerical tightness of the confidence intervals obtained by solving the problem in \Cref{prop:opt}.
\subsubsection{Monte Carlo approximation of the integral}\label{subsubsec: monte-carlo approximation}
The confidence intervals of \Cref{prop:opt} can be obtained efficiently as long as we can efficiently compute or estimate integrals over parameters.
When $\Theta$ is finite we can clearly replace integrals by summations. On the other hand, for continuous (or prohibitively large finite) $\Theta$ we can
employ a Monte Carlo approximation of the integral. In particular, we can use the procedure in Algorithm~\ref{alg:mcapprox}.

\textfloatsep=.5em
\begin{algorithm}
\caption{Monte Carlo Approximation}
\label{alg:mcapprox}
\begin{algorithmic}[1]
\STATE \textbf{Input}: Failure probability $\dt$, sample size parameters $K,m \in \NN$, posterior $\ps$, prior $\pz$
\STATE Sample $K$ tuples independently $(\th_i)_{i \in B_k} \sim \ps^m$ for $k \in [K]$ where\\ $B_k = \{(k-1)m+1, \ldots, k m\}$
\STATE Solve the following optimization problem for every $k\in[K]$:
\begin{align}
  \bar\nu_U(k) := \max_{ \{\nu_{\th_i}\}_{i\in B_k}}   &~~  \fr1m\sum_{i\in B_k} \nu_{\th_i}\label{eq:optimub0}
  ~~~ \text{ s.t. } ~~~ \frac{1}{em}\sum_{i\in B_k} \psi_n^{\star}(\th_i,\nu_{\th_i} ) \le  \sC_n + \ln \frac{1}{\delta}
\end{align}
\vspace{-1em}
    \STATE Repeat the above while replacing $\max$ with $\min$. Call the resulting optimal objective function as $\bar\nu_L(k)$, $\forall k\in[K]$ \STATE Let $\hat k_U = \arg\max_{k\in[K]} \bar \nu_U(k)
$ and $\hat k_L = \arg\min_{k\in[K]} \bar \nu_L(k)$
\STATE Compute
\vspace{-.5em}
\begin{align}\label{eq:MU}
    M_U \!=\! \max\left\{\mu\!: \kl\left( \bar\nu_U(\hk_U), \mu\right)\le\! \frac{\ln\fr{K}{2\delta}}{m}\right\},\,
    M_L \!=\! \min\left\{\mu\!: \kl\left( \bar\nu_L(\hk_L), \mu\right)\le\! \frac{\ln\fr{K}{2\delta}}{m}\right\}
\end{align}
\STATE \textbf{Output:} $M_L$ and $M_U$
\end{algorithmic}
\end{algorithm}
The following proposition (proved in \Cref{sec:mc}) states its correctness.
\begin{proposition}
  \label{prop:mc}
  Set $\delta \in (0,1]$. Under the assumptions of \cref{thm:pac-bayes-cb}, let $K = \lceil \ln(1/\delta) \rceil$.
            Then, with probability at least $1-3 \delta$, the outputs $M_L$ and $M_U$ of Algorithm~\ref{alg:mcapprox} satisfy
\begin{align*}
  M_L
  \leq
  \int \mu_{\theta} \diff \ps(\theta)
  \leq
  M_U~.
\end{align*} 
\end{proposition}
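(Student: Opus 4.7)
The plan is to decompose the failure event into three pieces, each of probability at most $\delta$, and combine them via a union bound. Throughout, $\mu_\theta$ denotes the deterministic true mean (as in \cref{thm:pac-bayes-cb}), not the optimization variable.

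First, \cref{thm:pac-bayes-cb} gives an event $E_1$ over $S$ with $\Pr(E_1) \geq 1-\delta$ on which $\int \psi_n^\star(\theta, \mu_\theta)\, \diff \ps(\theta) \leq \sC_n + \ln(1/\delta)$ for every $n$ and every data-dependent posterior $\ps$. Condition on $E_1$. Second, since $\psi_n^\star(\theta, \mu_\theta) \geq 0$ (take $\lambda = 0$ in the max), Markov's inequality applied to $(1/m)\sum_{i\in B_k} \psi_n^\star(\theta_i, \mu_{\theta_i})$ with $\theta_i \sim \ps$ gives per-tuple failure probability at most $1/e$; the factor $1/e$ inside the algorithm's constraint is calibrated precisely for this step. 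By independence of the $K = \lceil \ln(1/\delta) \rceil$ tuples, with probability at least $1 - e^{-K} \geq 1 - \delta$ (event $E_2$) some index $k^\star \in [K]$ satisfies $(1/(em))\sum_{i\in B_{k^\star}} \psi_n^\star(\theta_i, \mu_{\theta_i}) \leq \sC_n + \ln(1/\delta)$. On this $k^\star$, the true-mean vector $(\mu_{\theta_i})_{i\in B_{k^\star}}$ is feasible in both the max and the min optimizations, since the constraint is identical; hence $\bar\nu_U(k^\star) \geq \bar\mu_{k^\star} \geq \bar\nu_L(k^\star)$ with $\bar\mu_{k^\star} := (1/m)\sum_{i\in B_{k^\star}} \mu_{\theta_i}$. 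By the $\arg\max, \arg\min$ selection, $\bar\nu_U(\hat k_U) \geq \bar\mu_{k^\star}$ and $\bar\nu_L(\hat k_L) \leq \bar\mu_{k^\star}$.

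Third, apply the one-sided Cram\'er-Chernoff (KL) bound to each $\bar\mu_k = (1/m)\sum_{i\in B_k} \mu_{\theta_i}$, the average of i.i.d.\ $[0,1]$-valued quantities with mean $\int \mu_\theta \diff \ps(\theta)$. Union-bounding over $k \in [K]$ and both directions yields an event $E_3$ with $\Pr(E_3) \geq 1 - \delta$ on which $\kl(\bar\mu_k, \int \mu_\theta \diff \ps) \leq \ln(K/(2\delta))/m$ for every $k$ in the direction of the actual deviation. Since $E_2$ and $E_3$ depend only on the Monte Carlo draws, which are independent of $S$, a union bound yields $\Pr(E_1 \cap E_2 \cap E_3) \geq 1 - 3\delta$.

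On the intersection: if $\bar\nu_U(\hat k_U) \geq \int \mu_\theta \diff \ps$, then $M_U \geq \bar\nu_U(\hat k_U) \geq \int \mu_\theta \diff \ps$ follows immediately because $\mu = \bar\nu_U(\hat k_U)$ is KL-feasible (as $\kl(p,p) = 0$). Otherwise $\bar\mu_{k^\star} \leq \bar\nu_U(\hat k_U) < \int \mu_\theta \diff \ps$, and the monotonicity of $p \mapsto \kl(p, \int \mu_\theta \diff \ps)$ on $[0, \int \mu_\theta \diff \ps]$ gives $\kl(\bar\nu_U(\hat k_U), \int \mu_\theta \diff \ps) \leq \kl(\bar\mu_{k^\star}, \int \mu_\theta \diff \ps) \leq \ln(K/(2\delta))/m$, placing $\int \mu_\theta \diff \ps$ in the KL-feasible set defining $M_U$, so $M_U \geq \int \mu_\theta \diff \ps$. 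The bound $M_L \leq \int \mu_\theta \diff \ps$ follows symmetrically. The main delicacy is the KL-monotonicity step, which requires $\bar\mu_{k^\star}$, $\bar\nu_U(\hat k_U)$, and $\int \mu_\theta \diff \ps$ to lie on the same side of the KL minimum --- exactly what the Markov step guarantees in the nontrivial case.
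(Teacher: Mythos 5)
Your proposal is correct and follows essentially the same route as the paper's proof: the same three events (the PAC-Bayes coin-betting inequality, the Markov-plus-independence ``boosting-the-confidence'' step that the paper isolates as \Cref{prop:mcboost}, and the KL concentration of the Monte Carlo averages $\bar\mu_k$), combined by a union bound, followed by the same feasibility argument for the true-mean vector on the best block and the same two-case analysis using monotonicity of $\kl$. The only differences are cosmetic (you inline the Markov step and phrase Case~2 via membership of $\int \mu_\theta \diff \ps$ in the KL-feasible set rather than via monotonicity of $\kl$ in its second argument, which is the same fact).
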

Algorithm~\ref{alg:mcapprox} works by carefully controlling the Monte Carlo approximation through the deviation of the sample average over parameters from the integral.
In fact, while this is straightforward for bounded random variables, $\psi_{n}^{\star}$ considered here is not bounded.
One may attempt to make it bounded by clipping $\psi^*_n(\th_i,\mu_{\th_i})$ or reducing the range of $\lambda$ in the max operator in the definition of $\psi^*_n$, but these both lead to nonconvex constraints in \cref{eq:optimub0}.
Alternatively, since \cref{eq:main-invert} suggests that we need to lower bound an integral, we could right away get a ``low-probability'' bound arising from Markov's inequality:
$
  \frac{1}{\delta} \int \psi_{n}^{\star}(\theta,\mu_{\theta}) \diff \ps(\th)
  \geq
  \frac1m \sum_{i \in B_k} \psi_{n}^{\star}(\theta_i,\mu_{i})~.
$
Since this is unsatisfactory, here we resort to the ``boosting-the-confidence'' method~\citep{schapire1990strength,shalev2010learnability} which allows to convert polynomial concentration bounds into exponential ones at the expense of sample partitioning and running the algorithm multiple times, as described in \Cref{prop:mc}.
  Note that in our case this just translates into extra computation (running Monte Carlo approximation on $K$ independent parameter tuples), because we can always sample more parameter observations from $\ps$.
\Cref{prop:mc} is then justified through the use of the following inequality shown in \Cref{sec:mcboost}:
\begin{proposition}
  \label{prop:mcboost}
  Under conditions of \Cref{prop:mc}, with probability at least $1 - e^{-K}$,
\begin{align}
  \label{eq:minconstraint}
  \int \psi_n^{\star}(\th, \mu_{\th}) \diff \ps(\th) \geq \min_{k \in [K]} \ \frac{1}{e m} \sum_{i \in B_k} \psi_n^{\star}(\th_i, \mu_{\th_{i}})~.
\end{align}
\end{proposition}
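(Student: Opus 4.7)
\textbf{Proof plan for Proposition~\ref{prop:mcboost}.} The statement is exactly of the ``boosting the confidence'' flavor mentioned in the paragraph preceding it, so the plan is to apply Markov's inequality to each of the $K$ independent batches and then take an intersection of the complementary events.

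First I would record the key nonnegativity fact: $\psi_n^{\star}(\theta,\mu_\theta) \geq 0$ for all $\theta$ and all $\mu_\theta \in [0,1]$, because the set $[-\tfrac{1}{1-\mu_\theta},\tfrac{1}{\mu_\theta}]$ always contains $\lambda = 0$, at which the summand vanishes. This means the sample averages $A_k := \frac{1}{m}\sum_{i\in B_k}\psi_n^{\star}(\theta_i,\mu_{\theta_i})$ are nonnegative random variables, and so is the integral $I := \int \psi_n^{\star}(\theta,\mu_\theta)\,\diff P_n(\theta)$. Because the tuples in $B_k$ are drawn i.i.d.\ from $P_n$ (conditional on the data $S$, which fixes $P_n$ and hence the map $\theta \mapsto \mu_\theta$), we have $\mathbb{E}[A_k \mid S] = I$, and the $A_1,\ldots,A_K$ are independent conditional on $S$.

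Next, I would apply Markov's inequality to each nonnegative $A_k$ at the level $e\,I$, giving
\begin{align*}
  \Pr\!\left\{ A_k \geq e\,I \,\bigm|\, S \right\} \leq \frac{1}{e}~.
\end{align*}
By conditional independence of $A_1,\ldots,A_K$ given $S$, the probability that \emph{every} batch overshoots is at most $e^{-K}$:
\begin{align*}
  \Pr\!\left\{ \forall k \in [K] :\ A_k \geq e\,I \,\bigm|\, S \right\} \leq e^{-K}~.
\end{align*}
Taking expectation over $S$ preserves this bound. On the complementary event (of probability at least $1-e^{-K}$), there exists some $k^{\star}\in[K]$ with $A_{k^{\star}} < e\,I$, i.e., $I > \tfrac{1}{em}\sum_{i\in B_{k^{\star}}}\psi_n^{\star}(\theta_i,\mu_{\theta_i}) \geq \min_{k\in[K]} \tfrac{1}{em}\sum_{i\in B_k}\psi_n^{\star}(\theta_i,\mu_{\theta_i})$, which is the claimed inequality. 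The edge case $I=0$ is trivial because then $\psi_n^{\star}(\theta,\mu_\theta) = 0$ for $P_n$-almost every $\theta$, so both sides vanish almost surely (and in this case one simply uses the convention that Markov's inequality $\Pr\{A_k \geq 0\} \leq 1$ is vacuous but the claim still holds with equality).

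The argument is essentially a one-line application of Markov plus independence, so there is no real technical obstacle; the only subtle point worth emphasizing in the writeup is that, without the nonnegativity of $\psi_n^\star$ established in the first step, Markov's inequality would not be applicable --- this is exactly what motivated the ``boosting'' route over a more direct concentration argument, as discussed in the paragraph preceding the proposition.
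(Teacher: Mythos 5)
Your proposal is correct and follows essentially the same route as the paper's own proof: nonnegativity of $\psi_n^{\star}$ (since $\lambda=0$ is always feasible), Markov's inequality applied to each block at level $e$ times the integral, and conditional independence of the $K$ blocks to drive the failure probability down to $e^{-K}$. Your additional care in conditioning on $S$ and handling the degenerate case where the integral vanishes is a harmless refinement the paper omits.
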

\paragraph{How large $m$ needs to be?}
One question not discussed so far is the choice of Monte Carlo sample size $m$.
Technically, \Cref{prop:mcboost} holds for any $m \in \mathbb{N}$, but we can expect that choosing small $m$ will result in overly loose constraints in \cref{eq:minconstraint} and so the final confidence intervals will be wide.
To gauge a good choice of $m$ we consider a lower tail Bernstein's inequality~\citep{maurer2003bound}, which lower bounds the left-hand side of the constraint~\cref{eq:minconstraint}:
\begin{equation}
  \label{eq:psisumlowertail}
  \frac1m \sum_{i=1}^m \psi_n(\th_i, \mu_{\th_{i}})
  >
  \prr{
    \int \psi_n^{\star}(\th, \mu_{\th}) \diff \ps(\theta) - \sqrt{\frac{2 \ln \frac{1}{\delta}}{m}  {\color{blue} \int \psi_n^{\star}(\th, \mu_{\th})^2 \diff \ps(\theta)}}
  }_+~.
\end{equation}
Thus, having a {\color{blue} raw second moment} of $\psi_n^{\star}$ of order $o(m)$ guarantees asymptotic convergence of the sample average to the integral in the constraint \eqref{eq:minconstraint}.
Having a finite raw second moment suggests that the constraint is tight and a reasonable choice, is, for instance, $m = n^2$.
On the other hand, for ``hard'' problems (e.g., heavy-tailed) such moment is infinite and Monte Carlo estimation is infeasible.

\subsection{Proof of \cref{thm:pac-bayes-cb}}
\label{sec:pac-bayes-cb-proof}
  Let $\Delta_i(\theta) \df f(\theta, X_i) - \E[f(\theta, X_1)]$ and notably $\E[\Delta_i(\theta)] = 0$ for any $\theta \in \Theta$.
Consider an algorithm betting a signed fraction of its wealth equal to $B_i(\theta)$ at step $i$ and observing the outcome $\Delta_i(\theta)$.
Note that $B_i(\theta)$ is $\Sigma(X_1, \ldots, X_{i-1}, \theta)$-measurable.
Let the following be the cumulative loss (log-wealth) of the algorithm and the optimal cumulative loss, respectively
\begin{align*}
  \psi_n(\theta) \df \sum_{i=1}^n \ln(1 + B_i(\theta) \Delta_i(\theta))~,\qquad
  \psi_n^{\star}(\theta) \df \max_{\lambda \in \bra{-\frac{1}{1-\mu}, \frac{1}{\mu}} } \ \sum_{i=1}^n \ln\prr{1 + \lambda \Delta_i(\theta)}~.
\end{align*}
We are interested in showing an upper bound on $\int \psi_n^{\star}(\theta) \diff \ps(\theta)$ which holds for all data-dependent distribution $\ps$ simultaneously, and with high probability over the data.
To this end, \citet{OrabonaJ21} show that there exists a betting algorithm that guarantee that for any $\theta \in \Theta$,%
\footnote{Data-dependent bounds on the regret were also shown by \citet{OrabonaJ21}.}
\begin{align}
  \label{eq:pac-bayes-cb-proof-regret}
  \psi_n^{\star}(\theta) - \psi_n(\theta) \leq \ln \frac{\sqrt{\pi} \Gamma(n+1)}{\Gamma(n+\frac12)}~.
\end{align}
So it remains to give a bound on $\psi_n(\ps)$. We will need the following concentration inequality.
\begin{theorem}[{Ville's inequality~\citep[p.~84]{Ville39}}]
  \label{thm:ville}
  Let $\Delta_1, \ldots, \Delta_n$ be a sequence of non-negative random variables such that $\E[\Delta_i \mid \Delta_1, \ldots, \Delta_{i-1}] = 0$.
  Let $M_t > 0$ be $\Sigma(\Delta_1, \ldots, \Delta_{t-1})$-measurable such that $M_0 = 1$, and moreover let $\E[M_t \mid \Delta_1, \ldots, \Delta_{t-1}] \leq M_{t-1}$. 
  Then, for any $\delta \in (0, 1]$,
  $
    \PP\left\{\max_{t} M_t \geq \frac{1}{\delta}\right\} \leq \delta
  $.
\end{theorem}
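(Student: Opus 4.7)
The plan is to prove Ville's inequality by the classical argument for nonneg\-ative supermartingales: reduce it to a finite horizon via continuity of probability, then apply the optional stopping theorem at a level-crossing stopping time. The hypotheses in the statement give exactly that $(M_t)$ is a nonnegative supermartingale with respect to the filtration $\cF_t = \Sigma(\Delta_1,\ldots,\Delta_t)$, with $M_0 = 1$; the $\Delta_i$'s themselves only serve to generate this filtration.

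First I would fix a horizon $N \in \mathbb{N}$ and bound $\PP\{\max_{t \leq N} M_t \geq 1/\delta\}$. Define the bounded stopping time
\begin{align*}
\tau_N \df \min\cbra{N,\ \inf\cbra{t \leq N : M_t \geq 1/\delta}},
\end{align*}
with the convention $\inf \emptyset = +\infty$. Since $(M_t)$ is a supermartingale with respect to $(\cF_t)$ and $\tau_N$ is bounded by $N$, the optional stopping theorem yields $\E[M_{\tau_N}] \leq \E[M_0] = 1$. This is the one place where I rely on a standard ingredient outside what is stated in the excerpt; for a self-contained derivation one can instead iterate the supermartingale property: $\E[M_{\tau_N \wed (t+1)} \mid \cF_t] \leq M_{\tau_N \wed t}$ on $\{\tau_N > t\}$ and equality on $\{\tau_N \leq t\}$, so $\E[M_{\tau_N \wed t}]$ is nonincreasing in $t$, and evaluating at $t = N$ gives $\E[M_{\tau_N}] \leq 1$.

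Now let $A_N = \{\max_{t \leq N} M_t \geq 1/\delta\}$. On $A_N$ the infimum in the definition of $\tau_N$ is attained by some $t \leq N$, so $M_{\tau_N} \geq 1/\delta$; on $A_N^c$ we merely use $M_{\tau_N} \geq 0$. Combining with the optional stopping bound,
\begin{align*}
1 \ \geq\ \E[M_{\tau_N}] \ \geq\ \E\bra{M_{\tau_N}\,\one_{A_N}} \ \geq\ \frac{1}{\delta}\,\PP(A_N),
\end{align*}
so $\PP(A_N) \leq \delta$ for every $N$.

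Finally, the events $A_N$ are nondecreasing in $N$ with $\bigcup_N A_N = \{\sup_t M_t \geq 1/\delta\}$, so by continuity of $\PP$ from below, $\PP\{\sup_t M_t \geq 1/\delta\} = \lim_{N \to \infty} \PP(A_N) \leq \delta$, which is the claim. The only real obstacle is a careful justification of optional stopping for a possibly $L^1$-unbounded nonnegative supermartingale; using a bounded stopping time $\tau_N \leq N$ sidesteps this, since the finite-$N$ version of optional stopping follows from a short induction on $N$ using only the supermartingale property and nonnegativity, with no uniform integrability assumption.
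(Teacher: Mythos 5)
Your proof is correct: it is the standard optional-stopping argument for the maximal inequality of a nonnegative supermartingale (stop at the first level crossing truncated at horizon $N$, use $\E[M_{\tau_N}]\leq 1$ together with nonnegativity off the crossing event, then let $N\to\infty$), and you rightly read the hypotheses as simply saying that $(M_t)$ is a nonnegative supermartingale adapted to the filtration generated by the $\Delta_i$, ignoring the evident typo that the $\Delta_i$ cannot be both non-negative and conditionally mean zero. The paper itself gives no proof of this theorem --- it is quoted from Ville (1939) --- so there is nothing to compare against; your argument is the canonical one and is complete.
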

The proof will also require the following well-known \emph{change-of-measure} inequality:
\begin{lemma}[\cite{DoVa75,DuEl97:weakconv}]
  \label{lem:changeofmeasure}
  Let $p$ and $q$ be probability measures on $\Theta$ such that $p \ll q$.
  Then, for any measurable function $f \,:\, \Theta \rightarrow \reals$, we have
  \[
    \int f(\theta) \diff p(\theta) \leq \KL(p \| q) + \ln \int e^{f(\theta)} \diff q(\theta)~.
  \]
\end{lemma}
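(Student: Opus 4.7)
The plan is to prove this standard change-of-measure inequality by constructing the Gibbs (exponentially tilted) measure associated with $f$ and $q$, and then invoking non-negativity of the KL divergence against this tilted measure. First I would dispose of the trivial cases: if $Z := \int e^{f(\theta)} \diff q(\theta) = +\infty$ the right-hand side is $+\infty$ and the bound holds, and if $\KL(p \| q) = +\infty$ it again holds vacuously; so I assume $Z < \infty$ and $\KL(p \| q) < \infty$.

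Next I would define a probability measure $q_f$ on $\Theta$ by its Radon--Nikodym derivative
\[
\frac{\diff q_f}{\diff q}(\theta) = \frac{e^{f(\theta)}}{Z}~.
\]
Since $e^{f(\theta)} > 0$ for every $\theta$, $q_f$ and $q$ are mutually absolutely continuous, so the hypothesis $p \ll q$ yields $p \ll q_f$. Applying the chain rule for Radon--Nikodym derivatives and expanding the logarithm gives
\[
\KL(p \| q_f) = \int \ln\frac{\diff p}{\diff q_f}(\theta) \diff p(\theta) = \int \ln\!\left( \frac{\diff p}{\diff q}(\theta) \cdot \frac{Z}{e^{f(\theta)}} \right) \diff p(\theta) = \KL(p \| q) - \int f(\theta) \diff p(\theta) + \ln Z~.
\]
Finally I would invoke Gibbs' inequality $\KL(p \| q_f) \geq 0$ (which itself follows from Jensen's inequality applied to the convex function $x \mapsto x \ln x$), rearrange the display above, and recover exactly the stated bound.

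The main obstacle is not any deep algebraic manipulation --- the core derivation is a three-line computation --- but rather carefully validating the measure-theoretic preliminaries. In particular I must ensure that (i) the integral $\int f \diff p$ is well-defined, which requires showing that $f^{+}$ is $p$-integrable whenever the right-hand side is finite (this follows from finiteness of $\KL(p \| q)$ and of $\ln Z$ combined with the identity just derived, after first applying it to the truncations $f \wedge M$ and letting $M \to \infty$ by monotone convergence), and (ii) the Radon--Nikodym chain rule $\frac{\diff p}{\diff q_f} = \frac{\diff p}{\diff q} \cdot \frac{\diff q}{\diff q_f}$ is applicable, which is justified by the mutual absolute continuity of $q$ and $q_f$ established above. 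An alternative route is to start from Young's inequality $x y \leq x \ln x - x + e^{y}$ applied pointwise with $x = \frac{\diff p}{\diff q}(\theta)$ and $y = f(\theta) - \ln Z$, then integrate against $q$; this produces the same bound and simultaneously exhibits $q_f$ as the unique optimizer, which is aesthetically pleasing but requires no extra technology beyond the Gibbs-inequality proof above.
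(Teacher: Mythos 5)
Your proof is correct. Note that the paper does not prove this lemma at all --- it is imported directly from the literature (Donsker--Varadhan, Dupuis--Ellis), so there is no in-paper argument to compare against; your tilted-measure derivation, namely defining $\diff q_f / \diff q = e^{f}/Z$, expanding $\KL(p \| q_f) = \KL(p\|q) - \int f \diff p + \ln Z$ via the Radon--Nikodym chain rule, and invoking $\KL(p\|q_f) \geq 0$, is exactly the standard proof of this Donsker--Varadhan lower bound, and your handling of the degenerate cases ($Z = \infty$, infinite $\KL$, integrability of $f^{+}$ via truncation and monotone convergence) is appropriately careful.
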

Applying the above with $p=\ps$, $q=\pz$, $f=\psi_n$, and taking $\max_{n \in \mathbb{N}}$, we obtain
\begin{align}
  \label{eq:changeofmeasurestep}
  \max_{n \in \mathbb{N}} \int \psi_n(\theta) \diff \ps(\theta) - \KL(\ps \| \pz)
  \leq
  \ln \max_{n \in \mathbb{N}} \underbrace{\int \prod_{i=1}^n \prr{1 + B_i(\theta) \Delta_i(\theta)}  \diff \pz(\theta)}_{M_n},
\end{align}
where we exchanged $\ln$ and $\max_{n \in \mathbb{N}}$.
Now, the plan is to apply \cref{thm:ville} to $M_n$, which requires to show that $M_n$ is a martingale.
Using the notation $\E_i[\cdot] = \E[\cdot \,|\, \Delta_1(\theta), \dots, \Delta_i(\theta), \theta]$, we have
\begin{align*}
  \E_{n-1}[M_n] &=
  \E_{n-1} \int \prod_{i=1}^n \prr{1 + B_i(\theta) \Delta_i(\theta)}  \diff \pz(\theta)
  \stackrel{(a)}{=}
    \int \E_{n-1} \prod_{i=1}^n \prr{1 + B_i(\theta) \Delta_i(\theta)}  \diff \pz(\theta)\\
  &=
    \int
    \E_{n-1}\bra{\prr{1 + B_{n}(\theta) \Delta_{n}(\theta)}} \prod_{i=1}^{n-1} \prr{1 + B_i(\theta) \Delta_i(\theta)}
    \diff \pz(\theta)\\
          &=
    \int
    \prod_{i=1}^{n-1} \prr{1 + B_i(\theta) \Delta_i(\theta)}
    \diff \pz(\theta) = M_{n-1},
\end{align*}
where $(a)$ comes using the fact that $\pz$ is independent from the sample $S$ and by Fubini's theorem.
Thus, applying \Cref{thm:ville} to \cref{eq:changeofmeasurestep}, we obtain
\begin{align*}
  \PP\left\{\max_{n \in \mathbb{N}} \sup_{P \in \cK(\sX^n, \Theta)} \int \psi_n(\theta) \diff \ps(\theta) - \KL(\ps \| \pz)
  \leq  
  \ln \frac{1}{\delta}
  \right\}
  \geq 1-\delta \qquad (\delta \in (0,1])~.
\end{align*}
Finally, using \cref{eq:pac-bayes-cb-proof-regret} gives the statement
and completes the proof.
\jmlrQED

\vspace{-0.2cm}
\section{Experiments}
\label{sec:experiments}
In this section, we validate the numerical tightness of \cref{thm:pac-bayes-cb}. Additional experiments are in \cref{{sec:monte-carlo appendix}}. We perform experiments on simple synthetic scenarios where the parameter space is finite, and we fix the posterior and prior distributions.
We evaluated all the bounds on a sample size range $n\in \{2^c: c=1,\dots,15\}$, and we averaged the bound over 20 repetitions for each sample size.
In particular, we compare \Cref{prop:mc} to several PAC-Bayes baselines such as \citet{McAllester98}, \citet{london2019bayesian}, \citet{maurer2004note} and \citet{tolstikhin2013pac}, and one additional algorithm $\KL$-ver under several synthetic environments.

$\KL$-ver, the KL version of our algorithm, uses $n \cdot \KL(\hat{\mu}_\theta, \mu_\theta)$ for the optimization problem in \cref{eq:main-invert}, instead of $\psi^*_n(\theta, \mu_\theta)$. Theoretically, the log-wealth function $\psi^\star_n (\theta, \mu_\theta)$ is always greater than $n \cdot\KL(\hat{\mu}_\theta, \mu_\theta)$ by Proposition \ref{prop:maurer}.
On the other hand, Maurer's bound is looser than $\KL$-ver, which is shown in the proof of Proposition \ref{prop:maurer} in Appendix~\ref{sec:proofmaurer}.
Hence, $\KL$-ver is an ablation study on our novel optimization problem -- $\KL$-ver is looser than our proposed method of ~\cref{eq:main-invert} but tighter than Maurer's bound that is known to be very tight numerically.
\textfloatsep=.5em

\begin{figure}[t]
\centering
\begin{minipage}[b]{0.5\textwidth}
\centering
\includegraphics[width=\linewidth]{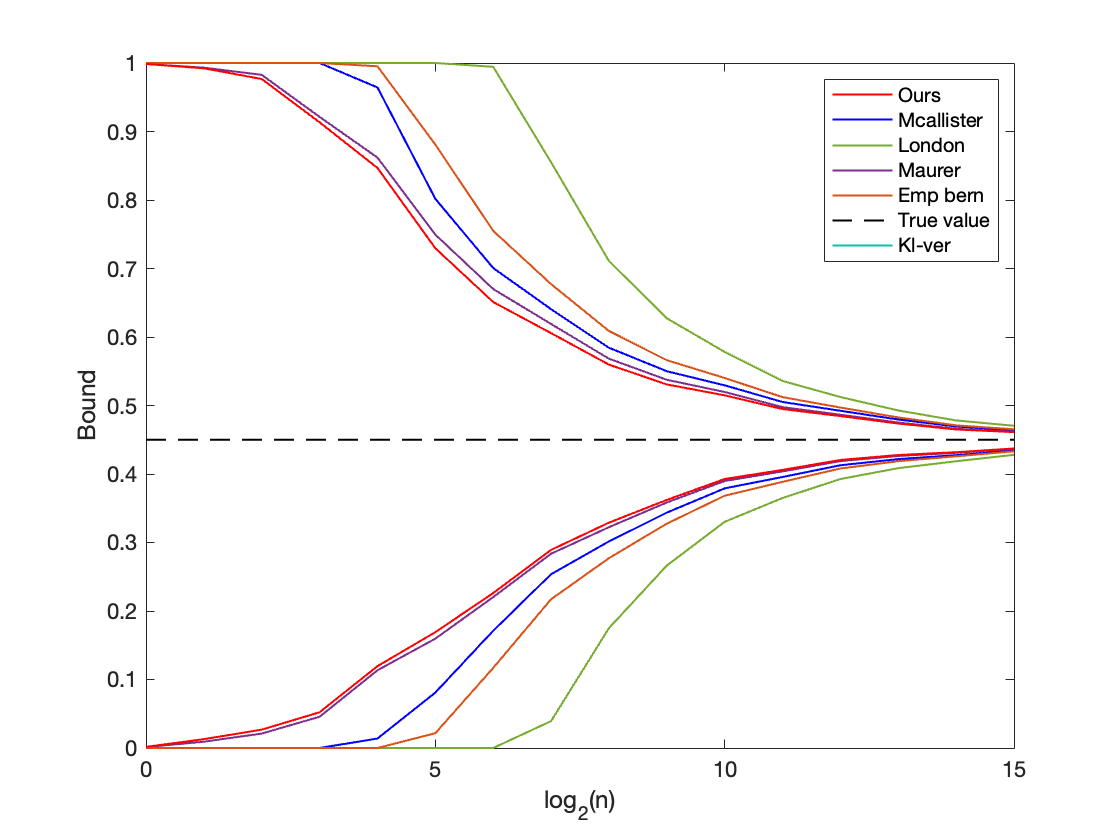}
\vspace{-2em}
\caption{Bernoulli case.}\label{Fig:Bernoulli}
\end{minipage}\begin{minipage}[b]{0.5\textwidth}
  \centering
\includegraphics[width=\linewidth]{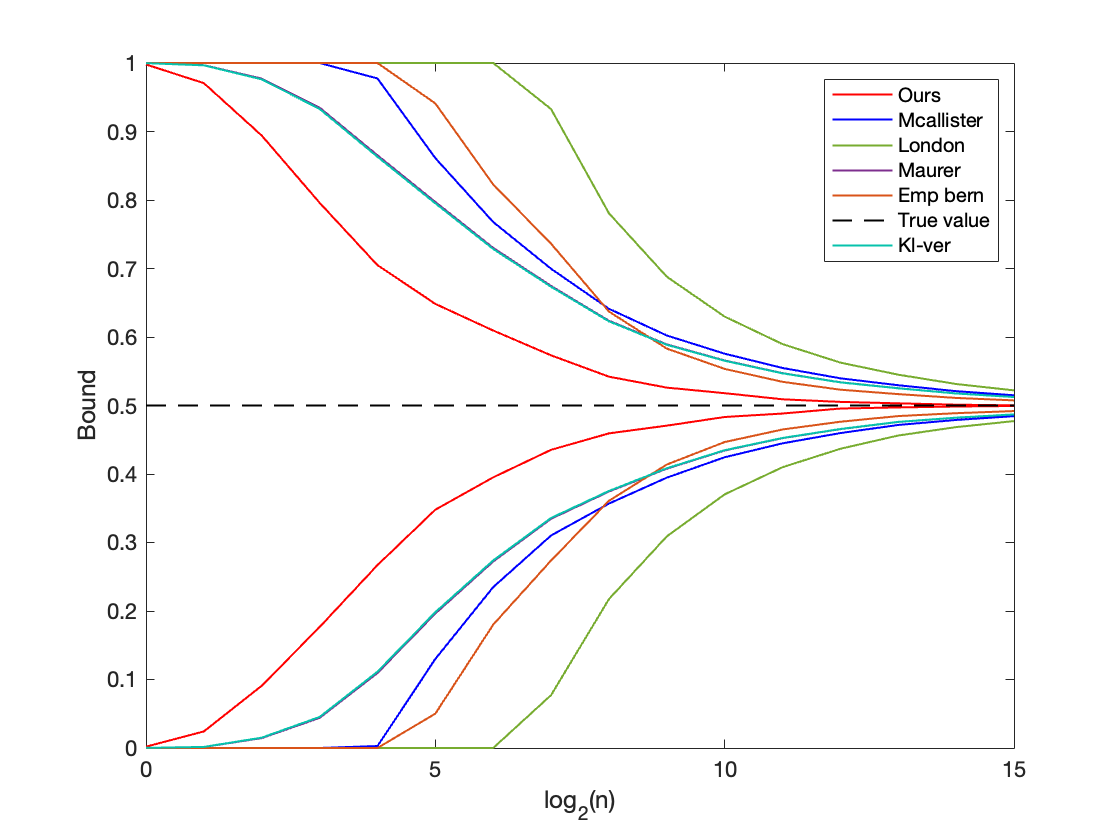}
\vspace{-2em}
  \caption{Binomial case.}\label{Fig:Binomial}
\end{minipage}
\end{figure}

The first experiment, reported in \Cref{Fig:Bernoulli}, represents the case where $X_i \sim \mathrm{Bernoulli}(1/2)$ i.i.d., $\pz = \mathrm{Bernoulli}(0.8)$, $\ps = \mathrm{Bernoulli}(0.9)$, and $f(x, \theta)=x\theta$.
The second experiment, in \Cref{Fig:Binomial}, represents the case where $X_i \sim N(0,1)$ i.i.d., $\pz = (\mathrm{Bin}(6,0.7)-3)/4$, $\ps = (\mathrm{Bin}(6,0.8)-3)/4$, and $f(x, \theta)=(\text{erf}(x \theta)+1)/2$.
Here, $\mathrm{erf}$ is the Gaussian error function and $\mathrm{Bin}(n,p)$ is a binomial distribution, with $n\in \mathbb{N}$ being the number of samples, and $p \in [0,1]$ the probability of success.

Since $\Theta$ is finite, we can explicitly calculate the means without resorting to the Monte Carlo simulation.
Hence, the optimization problem of \Cref{prop:opt} reduces to (and similarly for the lower bound by replacing $\max$ with $\min$)
\begin{align}
  \max_{ \{\mu_\theta: \theta\in \text{supp}(\ps)\}} \  \sum_{\theta\in \Theta} \mu_\theta  \cdot \ps(\theta)
  \quad
  \text{ subject to }  \quad \sum_{\theta \in \Theta} \psi^{\star}_{n}(\theta,\mu_\theta)  \cdot \ps(\theta) \le \sC_n + \ln \frac{1}{\delta}~.
   \label{eq:main-invert-finite-objective}
\end{align}
This is a convex optimization problem as we showed before, so we can use any off-the-shelf solver.\footnote{We use the \texttt{fmincon} function in Matlab.} 

Both figures show that our confidence intervals are consistently tighter than the ones of the baselines. Moreover, our guarantee and the one of $\KL$-ver hold uniformly over time, while it holds for a fixed number of samples for the baselines. Furthermore, for the Bernoulli case, $\KL$-ver is the same as our bound and still better than Maurer's, and in the Binomial case, it is worse than ours and very close to Maurer's bound.
This confirms our theoretical finding that our approach is ``two-inequalities away'' from Maurer's bound. For the case of continuous $\Theta$, check Appendix \ref{sec:monte-carlo appendix}.

\section{Conclusions, limitations, and future work}
We have presented a new PAC-Bayes bound based on a concentration technique derived from the coin-betting formalism. Our new upper bound implies some previous results from PAC-Bayes literature, and at the same time, we have shown that it is tighter in numerical simulations.

One limitation of our result is that it lacks a closed-form minimizer of the upper bound, such as the Gibbs measure in the standard PAC-Bayes analysis. While this is not surprising, it introduces a trade-off between computational complexity and tightness of the bound that was absent in previous approaches. In the future, we aim at precisely characterizing this trade-off, possibly delineating its Pareto frontier.
Another interesting venue is to investigate the numerical minimization of our upper bound over data-dependent distributions for risk minimization problems.

\section*{Acknowledgements}
Francesco Orabona is supported by the National Science Foundation under the grants no. 2022446
``Foundations of Data Science Institute'' and no. 2046096 ``CAREER: Parameter-free Optimization Algorithms
for Machine Learning''.

\bibliography{learning}

\begin{thebibliography}{46}
\providecommand{\natexlab}[1]{#1}
\providecommand{\url}[1]{\texttt{#1}}
\expandafter\ifx\csname urlstyle\endcsname\relax
  \providecommand{\doi}[1]{doi: #1}\else
  \providecommand{\doi}{doi: \begingroup \urlstyle{rm}\Url}\fi

\bibitem[Alquier(2021)]{alquier2021user}
P.~Alquier.
\newblock User-friendly introduction to {PAC}-{Bayes} bounds.
\newblock \emph{arXiv preprint arXiv:2110.11216}, 2021.

\bibitem[Alquier et~al.(2016)Alquier, Ridgway, and
  Chopin]{alquier2016properties}
P.~Alquier, J.~Ridgway, and N.~Chopin.
\newblock {On the properties of variational approximations of Gibbs
  posteriors}.
\newblock \emph{Journal of Machine Learning Research}, 17\penalty0
  (1):\penalty0 8374--8414, 2016.

\bibitem[Ambroladze et~al.(2006)Ambroladze, Parrado-Hern{\'a}ndez, and
  Shawe-Taylor]{ambroladze2006tighter}
A.~Ambroladze, E.~Parrado-Hern{\'a}ndez, and J.~Shawe-Taylor.
\newblock Tighter {PAC}-{B}ayes bounds.
\newblock \emph{Advances in Neural Information Processing Systems}, 19, 2006.

\bibitem[Audibert et~al.(2007)Audibert, Munos, and
  Szepesv{\'a}ri]{audibert2007tuning}
J.-Y. Audibert, R.~Munos, and Cs. Szepesv{\'a}ri.
\newblock Tuning bandit algorithms in stochastic environments.
\newblock In \emph{Algorithmic Learning Theory (ALT)}, pages 150--165.
  Springer, 2007.

\bibitem[Bartlett and Mendelson(2002)]{bartlett2002rademacher}
P.~L. Bartlett and S.~Mendelson.
\newblock Rademacher and gaussian complexities: Risk bounds and structural
  results.
\newblock \emph{Journal of Machine Learning Research}, 3\penalty0
  (Nov):\penalty0 463--482, 2002.

\bibitem[Boucheron et~al.(2013)Boucheron, Lugosi, and
  Massart]{boucheron2013concentration}
S.~Boucheron, G.~Lugosi, and P.~Massart.
\newblock \emph{Concentration inequalities: A nonasymptotic theory of
  independence}.
\newblock Oxford University Press, 2013.

\bibitem[Catoni(2007)]{catoni2007}
O.~Catoni.
\newblock {PAC-Bayesian Supervised Classification: The Thermodynamics of
  Statistical Learning}.
\newblock IMS Lecture Notes-Monograph Series, 56, 2007.

\bibitem[Cesa-Bianchi and Lugosi(2006)]{Cesa-BianchiL06}
N.~Cesa-Bianchi and G.~Lugosi.
\newblock \emph{Prediction, learning, and games}.
\newblock Cambridge University Press, 2006.

\bibitem[Cover(1991)]{Cover91}
T.~M. Cover.
\newblock Universal portfolios.
\newblock \emph{Mathematical Finance}, pages 1--29, 1991.

\bibitem[Donsker and Varadhan(1975)]{DoVa75}
M.~D. Donsker and S.~S. Varadhan.
\newblock Asymptotic evaluation of certain {M}arkov process expectations for
  large time.
\newblock \emph{Communications on Pure and Applied Mathematics}, 28, 1975.

\bibitem[Dupuis and R.~S.~Ellis(1997)]{DuEl97:weakconv}
P.~Dupuis and R.~S. R.~S.~Ellis.
\newblock \emph{A Weak Convergence Approach to the Theory of Large Deviations}.
\newblock Wiley-Interscience, 1997.

\bibitem[Dziugaite and Roy(2018)]{dziugaite2018entropy}
G.~K. Dziugaite and D.~Roy.
\newblock {Entropy-SGD optimizes the prior of a PAC-Bayes bound: Generalization
  properties of Entropy-SGD and data-dependent priors}.
\newblock In \emph{International Conference on Machine Learing (ICML)}, pages
  1376--1385, 2018.

\bibitem[Dziugaite and Roy(2017)]{dziugaite2017computing}
G.~K. Dziugaite and D.~M. Roy.
\newblock {Computing Nonvacuous Generalization Bounds for Deep (Stochastic)
  Neural Networks with Many More Parameters than Training Data}.
\newblock In \emph{Uncertainty in Artificial Intelligence (UAI)}, 2017.

\bibitem[Fan et~al.(2015)Fan, Grama, and Liu]{fan2015exponential}
X.~Fan, I.~Grama, and Q.~Liu.
\newblock Exponential inequalities for martingales with applications.
\newblock \emph{Electronic Journal of Probability}, 20:\penalty0 1--22, 2015.

\bibitem[Gr\"unwald and Mehta(2019)]{grunwald2019tight}
P.~D. Gr\"unwald and N.~A. Mehta.
\newblock {A tight excess risk bound via a unified
  PAC-Bayesian-Rademacher-Shtarkov-MDL complexity}.
\newblock In \emph{Algorithmic Learning Theory (ALT)}, volume~98, pages
  433--465. PMLR, 2019.

\bibitem[Haddouche and Guedj(2022)]{haddouche2022pac}
M.~Haddouche and B.~Guedj.
\newblock {PAC}-{Bayes} with unbounded losses through supermartingales.
\newblock \emph{arXiv preprint arXiv:2210.00928}, 2022.

\bibitem[Jun and Orabona(2019)]{JunO19}
K.-S. Jun and F.~Orabona.
\newblock Parameter-free online convex optimization with sub-exponential noise.
\newblock In \emph{Proc. of the Conference on Learning Theory (COLT)}, 2019.

\bibitem[Kakade et~al.(2008)Kakade, Sridharan, and
  Tewari]{kakade2008complexity}
S.~M. Kakade, K.~Sridharan, and A.~Tewari.
\newblock On the complexity of linear prediction: Risk bounds, margin bounds,
  and regularization.
\newblock \emph{Advances in Neural Information Processing Systems}, 21, 2008.

\bibitem[Kallenberg(2017)]{kallenberg2017}
O.~Kallenberg.
\newblock \emph{Random Measures, Theory and Applications}.
\newblock Springer, 2017.

\bibitem[Kelly(1956)]{Kelly56}
J.~L. Kelly, jr.
\newblock A new interpretation of information rate.
\newblock \emph{IRE Transactions on Information Theory}, 2\penalty0
  (3):\penalty0 185--189, 1956.

\bibitem[Kuzborskij and Szepesvári(2019)]{kuzborskij2019efron}
I.~Kuzborskij and Cs. Szepesvári.
\newblock Efron-{S}tein {PAC}-{B}ayesian {I}nequalities.
\newblock arXiv:1909.01931, 2019.

\bibitem[Kuzborskij et~al.(2019)Kuzborskij, Cesa-Bianchi, and
  Szepesv\'ari]{kuzborskij2019distribution}
I.~Kuzborskij, N.~Cesa-Bianchi, and C.~Szepesv\'ari.
\newblock {Distribution-Dependent Analysis of Gibbs-ERM Principle}.
\newblock In \emph{Conference on Computational Learning Theory (COLT)},
  volume~99, pages 2028--2054. PMLR, 2019.

\bibitem[Langford and Caruana(2001)]{LangfordCaruana2001}
J.~Langford and R.~Caruana.
\newblock {(Not) bounding the true error}.
\newblock In \emph{Advances in Neural Information Processing Systems}, pages
  809--816, 2001.

\bibitem[London and Sandler(2019)]{london2019bayesian}
B.~London and T.~Sandler.
\newblock Bayesian counterfactual risk minimization.
\newblock In \emph{International Conference on Machine Learing (ICML)}, 2019.

\bibitem[Maurer(2003)]{maurer2003bound}
A.~Maurer.
\newblock A bound on the deviation probability for sums of non-negative random
  variables.
\newblock \emph{J. Inequalities in Pure and Applied Mathematics}, 4\penalty0
  (1):\penalty0 15, 2003.

\bibitem[Maurer(2004)]{maurer2004note}
A.~Maurer.
\newblock A note on the {PAC} {B}ayesian theorem.
\newblock \emph{arXiv preprint arXiv:0411099}, 2004.

\bibitem[Maurer and Pontil(2009)]{maurer2009empirical}
A.~Maurer and M.~Pontil.
\newblock Empirical bernstein bounds and sample variance penalization.
\newblock In \emph{Conference on Computational Learning Theory (COLT)}, 2009.

\bibitem[McAllester(1998)]{McAllester98}
D.~A. McAllester.
\newblock Some {PAC}-{B}ayesian theorems.
\newblock In \emph{Proceedings of the eleventh annual conference on
  Computational learning theory}, pages 230--234, 1998.

\bibitem[Mhammedi et~al.(2019)Mhammedi, Gr\"{u}nwald, and Guedj]{MhammediGG19}
Z.~Mhammedi, P.~Gr\"{u}nwald, and B.~Guedj.
\newblock {PAC}-{B}ayes un-expected {B}ernstein inequality.
\newblock In H.~Wallach, H.~Larochelle, A.~Beygelzimer, F.~d\textquotesingle
  Alch\'{e}-Buc, E.~Fox, and R.~Garnett, editors, \emph{Advances in Neural
  Information Processing Systems}, volume~32. Curran Associates, Inc., 2019.

\bibitem[Orabona(2019)]{Orabona19}
F.~Orabona.
\newblock A modern introduction to online learning.
\newblock \emph{arXiv preprint arXiv:1912.13213}, 2019.
\newblock URL \url{https://arxiv.org/abs/1912.13213}.

\bibitem[Orabona and Jun(2021)]{OrabonaJ21}
F.~Orabona and K.-S. Jun.
\newblock Tight concentrations and confidence sequences from the regret of
  universal portfolio.
\newblock \emph{arXiv preprint arXiv:2110.14099}, 2021.

\bibitem[Orabona and P\'al(2016)]{OrabonaP16}
F.~Orabona and D.~P\'al.
\newblock Coin betting and parameter-free online learning.
\newblock In D.~D. Lee, M.~Sugiyama, U.~V. Luxburg, I.~Guyon, and R.~Garnett,
  editors, \emph{Advances in Neural Information Processing Systems 29}, pages
  577--585. Curran Associates, Inc., 2016.

\bibitem[P{\'e}rez-Ortiz et~al.(2021)P{\'e}rez-Ortiz, Rivasplata, Shawe-Taylor,
  and Szepesv{\'a}ri]{perez-ortiz2020tighter}
M.~P{\'e}rez-Ortiz, O.~Rivasplata, J.~Shawe-Taylor, and C.~Szepesv{\'a}ri.
\newblock {Tighter risk certificates for neural networks}.
\newblock \emph{Journal of Machine Learning Research}, 2021.

\bibitem[Raginsky et~al.(2017)Raginsky, Rakhlin, and
  Telgarsky]{raginsky2017non}
M.~Raginsky, A.~Rakhlin, and M.~Telgarsky.
\newblock Non-convex learning via stochastic gradient langevin dynamics: a
  nonasymptotic analysis.
\newblock In \emph{Conference on Computational Learning Theory (COLT)}, pages
  1674--1703. PMLR, 2017.

\bibitem[Rakhlin and Sridharan(2017)]{RakhlinS17}
A.~Rakhlin and K.~Sridharan.
\newblock On equivalence of martingale tail bounds and deterministic regret
  inequalities.
\newblock In \emph{Proc. of the Conference On Learning Theory (COLT)}, pages
  1704--1722, 2017.

\bibitem[Schapire(1990)]{schapire1990strength}
R.~E. Schapire.
\newblock The strength of weak learnability.
\newblock \emph{Machine Learning}, 5:\penalty0 197--227, 1990.

\bibitem[Seeger(2002)]{seeger2002}
M.~Seeger.
\newblock {PAC-Bayesian generalisation error bounds for Gaussian process
  classification}.
\newblock \emph{Journal of Machine Learning Research}, 3\penalty0
  (Oct):\penalty0 233--269, 2002.

\bibitem[Seldin et~al.(2012)Seldin, Laviolette, Cesa-Bianchi, Shawe-Taylor, and
  Auer]{seldin2012pac}
Y.~Seldin, F.~Laviolette, N.~Cesa-Bianchi, J.~Shawe-Taylor, and P.~Auer.
\newblock {PAC-B}ayesian inequalities for martingales.
\newblock \emph{IEEE Transactions on Information Theory}, 58\penalty0
  (12):\penalty0 7086--7093, 2012.

\bibitem[Shafer and Vovk(2001)]{ShaferV05}
G.~Shafer and V.~Vovk.
\newblock \emph{Probability and finance: it's only a game!}
\newblock John Wiley \& Sons, 2001.

\bibitem[Shalev-Shwartz et~al.(2010)Shalev-Shwartz, Shamir, Srebro, and
  Sridharan]{shalev2010learnability}
S.~Shalev-Shwartz, O.~Shamir, N.~Srebro, and K.~Sridharan.
\newblock Learnability, stability and uniform convergence.
\newblock \emph{Journal of Machine Learning Research}, 11:\penalty0 2635--2670,
  2010.

\bibitem[Tolstikhin and Seldin(2013)]{tolstikhin2013pac}
I.~O. Tolstikhin and Y.~Seldin.
\newblock {PAC}-{B}ayes-empirical-{B}ernstein inequality.
\newblock In C.J. Burges, L.~Bottou, M.~Welling, Z.~Ghahramani, and K.Q.
  Weinberger, editors, \emph{Advances in Neural Information Processing
  Systems}, volume~26. Curran Associates, Inc., 2013.

\bibitem[Ville(1939)]{Ville39}
J.~Ville.
\newblock \emph{Étude critique de la notion de collectif}.
\newblock Gauthier-Villars, Paris, 1939.
\newblock URL \url{http://archive.numdam.org/item/THESE_1939__218__1_0/}.

\bibitem[Wainwright(2019)]{wainwright2019high}
M.~J. Wainwright.
\newblock \emph{High-dimensional statistics: A non-asymptotic viewpoint},
  volume~48.
\newblock Cambridge University Press, 2019.

\bibitem[Wu and Seldin(2022)]{wu2022i}
Y.-S. Wu and Y.~Seldin.
\newblock Split-kl and {PAC}-{B}ayes-split-kl inequalities for ternary random
  variables.
\newblock In \emph{Advances in Neural Information Processing Systems}, 2022.

\bibitem[Zhang et~al.(2017)Zhang, Bengio, Hardt, Recht, and
  Vinyals]{zhang17understanding}
C.~Zhang, S.~Bengio, M.~Hardt, B.~Recht, and O.~Vinyals.
\newblock Understanding deep learning requires rethinking generalization.
\newblock In \emph{5th International Conference on Learning Representations,
  {ICLR} 2017, Toulon, France, April 24-26, 2017, Conference Track
  Proceedings}, 2017.

\bibitem[Zhou et~al.(2019)Zhou, Veitch, Austern, Adams, and
  Orbanz]{zhou2019nonvacuous}
W.~Zhou, V.~Veitch, M.~Austern, R.~P. Adams, and P.~Orbanz.
\newblock Non-vacuous generalization bounds at the {ImageNet} scale: a
  {PAC}-{B}ayesian compression approach.
\newblock In \emph{International Conference on Learning Representations
  (ICLR)}, 2019.

\end{thebibliography}

\clearpage
\appendix

\section{Remaining proofs}
\label{sec:proofs}
\subsection{Proof of \Cref{prop:mcallister}}
\label{sec:mcallister}
  Consider the right hand side of \cref{eq:coinbettingineq} without $\max_{n \in \mathbb{N}}$.
  Then, we have the following inequalities:
  \begin{align*}
  \int &\max_{\lambda \in [-\frac{1}{1-\mu_{\theta}}, \frac{1}{\mu_{\theta}}]} \ \sum_{i=1}^n \ln\prr{1 + \lambda (f(\theta,X_i) - \mu_{\theta})} \diff \ps(\theta)\\
  &\stackrel{(a)}{\geq}
  \int \max_{\lambda \in [-1, 1]} \ \sum_{i=1}^n \ln\prr{1 + \lambda (f(\theta,X_i) - \mu_{\theta})} \diff \ps(\theta)\\
  &\geq
    \int \max_{\lambda \in [-1, 1]}\Big\{ \lambda \sum_{i=1}^n (f(\theta,X_i) - \mu_{\theta})
    -
    \lambda^2 \sum_{i=1}^n (f(\theta,X_i) - \mu_{\theta})^2 \Big\} \diff \ps(\theta)\\
  &\geq
    \int \max_{\lambda \in [-1, 1]}\Big\{ \lambda \sum_{i=1}^n (f(\theta,X_i) - \mu_{\theta})
    -
    \lambda^2 n \Big\}\\
  &=
    \frac{1}{4 n} \int
    \Big(\sum_{i=1}^n (f(\theta,X_i) - \mu_{\theta})\Big)^2
    \diff \ps(\theta) \tag{Maximizing in $\lambda$; note that optimal $\lambda \in [-1,1]$}\\
  &\geq
    \frac{1}{4 n} \Big(\int
    \sum_{i=1}^n (f(\theta,X_i) - \mu_{\theta})
    \diff \ps(\theta)\Big)^2, \tag{Jensen's inequality}
\end{align*}
where step $(a)$ comes
since $[-1,1] \subset [-\frac{1}{1-\mu_{\theta}}, \frac{1}{\mu_{\theta}}]$ since $\mu_{\theta} \in [0,1]$ almost surely.
Now, applying \cref{thm:pac-bayes-cb} gives
\begin{align*}
  \PP\left\{\frac{1}{4 n} \Big(\int
  \sum_{i=1}^n (f(\theta,X_i) - \mu_{\theta})
  \diff \ps(\theta)\Big)^2
  -
  \sC_n \geq \ln \frac{1}{\delta}\right\} \leq \delta
\end{align*}
and the statement follows.
\jmlrQED
\clearpage
\subsection{Proof of \Cref{prop:maurer}}
\label{sec:proofmaurer}
The proof is based on the following proposition:
\begin{proposition}[{\citep[Proposition 1]{OrabonaJ21}}]
  \label{prop:littlekl}
  Let $X_1, \ldots, X_n \in [0,1]$, let $\hat{\mu} \df (X_1 + \dots + X_n) / n$, and moreover let $\mu \df \E[X_1] \in [0,1]$.
  Then,
  \begin{align*}
    \max_{\lambda \in [-\frac{1}{1-\mu}, \frac{1}{\mu}]} \sum_{i=1}^n \ln\big(1 + \lambda (X_i - \mu) \big) \geq n \, \kl(\hat{\mu}, \mu)~.
  \end{align*}
    Moreover, if $X_1, \ldots, X_n \in \{0,1\}$, we achieve equality in the above.
\end{proposition}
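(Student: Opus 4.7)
The plan is to lower bound the maximum by evaluating at one carefully chosen $\lambda$, namely $\lambda^\star \df (\hat{\mu} - \mu)/(\mu(1-\mu))$. This is the unconstrained first-order-optimal choice in the Bernoulli case (a short calculation: differentiating $n\hat{\mu}\ln(1+\lambda(1-\mu)) + n(1-\hat{\mu})\ln(1-\lambda\mu)$ and setting it to zero). First I would check feasibility: the constraint $\lambda^\star \leq 1/\mu$ collapses to $\hat{\mu} \leq 1$ and $\lambda^\star \geq -1/(1-\mu)$ collapses to $\hat{\mu} \geq 0$, both immediate from $X_i \in [0,1]$.

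Next I would rely on two algebraic identities that make $\lambda^\star$ transparent,
\begin{align*}
1 + \lambda^\star(1-\mu) = \frac{\hat{\mu}}{\mu}, \qquad 1 - \lambda^\star \mu = \frac{1-\hat{\mu}}{1-\mu}~.
\end{align*}
For each $X_i \in [0,1]$, I can then write $1 + \lambda^\star(X_i - \mu)$ as the convex combination $X_i \cdot (1+\lambda^\star(1-\mu)) + (1-X_i) \cdot (1 - \lambda^\star \mu)$ of strictly positive reals, and apply concavity of $\ln$ (Jensen's inequality) to get
\begin{align*}
\ln\prr{1 + \lambda^\star(X_i - \mu)} \geq X_i \ln\prr{\frac{\hat{\mu}}{\mu}} + (1-X_i) \ln\prr{\frac{1-\hat{\mu}}{1-\mu}}~.
\end{align*}
Summing over $i$ and using $\sum_i X_i = n\hat{\mu}$ yields exactly $n\,\kl(\hat{\mu}, \mu)$ on the right, which lower bounds the maximum because $\lambda^\star$ is feasible.

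For the Bernoulli equality claim, note that when $X_i \in \{0,1\}$ the convex combination above is degenerate, so Jensen holds with equality for every $i$ — the sum evaluated at $\lambda^\star$ already equals $n\,\kl(\hat{\mu}, \mu)$. Moreover, the objective $\lambda \mapsto \sum_i \ln(1+\lambda(X_i-\mu))$ is concave (a sum of concave logarithms), so $\lambda^\star$ being an interior stationary point makes it the global unconstrained maximizer, and hence the constrained max is also $n\,\kl(\hat{\mu}, \mu)$. The only real obstacle is guessing $\lambda^\star$; once it is in hand, the proof reduces to the two algebraic identities above and a single application of Jensen's inequality.
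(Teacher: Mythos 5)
Your proof is correct: the choice $\lambda^\star = (\hat{\mu}-\mu)/(\mu(1-\mu))$ is feasible, the two algebraic identities hold, and writing $1+\lambda^\star(X_i-\mu)$ as the convex combination $X_i\cdot\frac{\hat{\mu}}{\mu}+(1-X_i)\cdot\frac{1-\hat{\mu}}{1-\mu}$ followed by Jensen's inequality yields exactly $n\,\kl(\hat{\mu},\mu)$, with equality (and global optimality by concavity in $\lambda$) in the Bernoulli case. The paper does not prove this statement itself but imports it from \citet{OrabonaJ21}, and your argument is essentially the one given there, so nothing is missing.
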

Then, \Cref{prop:littlekl} combined with \Cref{thm:pac-bayes-cb} gives that with probability at least $1-\delta$, simultaneously for all $n \in \mathbb{N}$ and all $\ps$, we have
\begin{align*}
  \frac{\ln \frac{1}{\delta} + \sC_n}{n}
  &\geq \int \kl(\hat{\mu}_{\theta}, \mu_{\theta}) \diff \ps(\theta)\\
  &=  \int \sum_{x \in \{0,1\}} \ln \Bigg(\frac{\mathrm{Bern}(x \mid \hat{\mu}_{\theta})}{\mathrm{Bern}(x \mid \mu_{\theta})} \Bigg) \, \mathrm{Bern}(x \mid \hat{\mu}_{\theta}) \diff \ps(\theta)\\
  &\stackrel{(a)}{\geq}  \sum_{x \in \{0,1\}} \ln \Bigg(\frac{\int \mathrm{Bern}(x \mid \hat{\mu}_{\theta}) \diff \ps(\theta)}{\int \mathrm{Bern}(x \mid \mu_{\theta}) \diff \ps(\theta)} \Bigg) \, \int \mathrm{Bern}(x \mid \hat{\mu}_{\theta}) \diff \ps(\theta)\\
    &=   \kl\Big(\int \hat{\mu}_{\theta} \diff \ps(\theta), \int \mu_{\theta} \diff \ps(\theta) \Big)~,
\end{align*}
where $(a)$ comes by exchanging summation and integration and applying the log-sum inequality.
\jmlrQED
\subsection{Proof of \Cref{thm:empiricalbernstein}}
The proof largely follows that of \citet[Theorem 6]{OrabonaJ21}.
Set $\epsilon_{\theta} = \mu_{\theta} - \hat{\mu}_{\theta}$, and so $\epsilon_{\theta} + \hat{\mu}_{\theta}  \in [0,1]$.
Then, we have
\begin{align*}
  \psi^{\star}(\theta, \mu_{\theta})
  \geq
  \max_{\lambda \in [-1,1]} \ \sum_{i=1}^n \ln\prr{1 + \lambda (f(\theta,X_i) - (\hat{\mu}_{\theta} + \epsilon_{\theta}))},
\end{align*}
and applying Jensen's inequality
\begin{align}
  \label{eq:bernproof1}
  \int \psi^{\star}(\theta, \mu_{\theta}) \diff \ps
  \geq
  \max_{\lambda \in [-1,1]} \ \int \sum_{i=1}^n \ln\prr{1 + \lambda (f(\theta,X_i) - (\hat{\mu}_{\theta} + \epsilon_{\theta}))} \diff \ps~.
\end{align}
Now, we further relax the above by taking a lower bound.
In particular, \citep[Eq. 4.12]{fan2015exponential} shows that
for any $|x| \leq 1$ and $|\lambda| \leq 1$,
\begin{align}
  \label{eq:logineq}
  \ln(1 + \lambda x) \geq \lambda x + \big(\ln(1 - |\lambda|) + |\lambda|\big) x^2~.
\end{align}
The above is combined with the following lemma:
\begin{lemma}[{\citep[Lemma 5]{OrabonaJ21}}]
  \label{lem:empiricalbernsteintechnical}
  Let $f(\lambda) = a \lambda + b \big(\ln(1 - |\lambda|) + |\lambda|\big)$ for some $a \in \reals, b \geq 0$.
  Then, $\max_{\lambda \in [-1,1]} f(\lambda) \geq \frac{a^2}{(4/3) |a| + 2 b}$.
\end{lemma}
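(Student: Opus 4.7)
My plan is to compute the maximum of $f$ exactly in closed form, and then reduce the claimed bound to a one-variable scalar inequality which I prove by a second-derivative comparison at the origin.

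First, I would observe that the function $f(\lambda) = a\lambda + b\bigl(\ln(1-|\lambda|) + |\lambda|\bigr)$ is even in the sense that $f(-\lambda)$ has the same form with $a$ replaced by $-a$. Thus $\max_{\lambda \in [-1,1]} f(\lambda)$ equals the corresponding maximum when $a$ is replaced by $|a|$ and $\lambda$ is restricted to $[0,1)$. So I may assume $a \geq 0$ and work with $g(\lambda) = a\lambda + b(\ln(1-\lambda) + \lambda)$ on $\lambda \in [0,1)$ (the case $b=0$ is trivial and $a=0$ gives $0$, matching the right-hand side).

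Next, I would make the substitution $\lambda = u/(1+u)$, so that $1-\lambda = 1/(1+u)$ and $\ln(1-\lambda) = -\ln(1+u)$. A short calculation gives
\begin{equation*}
g(\lambda) \;=\; (a+b)\,\frac{u}{1+u} \;-\; b\ln(1+u)\,.
\end{equation*}
The derivative in $u$ is $\frac{a-bu}{(1+u)^2}$, which vanishes at $u^\star = a/b$ and yields a maximum. Plugging $u^\star$ back in, the maximum value simplifies cleanly to
\begin{equation*}
\max_{\lambda \in [0,1)} g(\lambda) \;=\; a \;-\; b\ln\bigl(1 + a/b\bigr)\,.
\end{equation*}

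It then remains to verify the scalar inequality $a - b\ln(1+a/b) \geq a^2/\bigl((4/3)a + 2b\bigr)$ for $a,b \geq 0$. Setting $t = a/b$ and dividing by $b$, this is equivalent to
\begin{equation*}
t - \ln(1+t) \;\geq\; \frac{3 t^2}{2(2t+3)} \qquad (t \geq 0)\,.
\end{equation*}
I would prove this by checking that both sides and their first derivatives vanish at $t=0$, and then comparing second derivatives: an elementary computation gives that the second derivative of the left-hand side (after clearing the common factor $2(2t+3)$) is $\tfrac{8t^2+12t+6}{(1+t)^2}$, while the right-hand side's second derivative is the constant $6$, and their difference simplifies to $2t^2/(1+t)^2 \geq 0$. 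Integrating twice from $t=0$ gives the desired inequality. The main (and only real) obstacle is this last step: choosing the cleanest route to the scalar inequality, which is why I prefer the ``compare derivatives at zero'' approach over direct manipulation of logarithms.
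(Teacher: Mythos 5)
Your proof is correct, and I verified each step: the symmetry reduction to $a \geq 0$, the substitution $\lambda = u/(1+u)$ giving $g = (a+b)\tfrac{u}{1+u} - b\ln(1+u)$ with derivative $\tfrac{a-bu}{(1+u)^2}$, the closed-form maximum $a - b\ln(1+a/b)$, and the scalar inequality $t - \ln(1+t) \geq \tfrac{3t^2}{2(2t+3)}$ via the second-derivative comparison (the difference of second derivatives of the cleared forms is indeed $\tfrac{2t^2}{(1+t)^2} \geq 0$, and both value and first derivative match at $t=0$). Note that the paper does not prove this lemma at all --- it imports it by citation as Lemma 5 of \citet{OrabonaJ21} --- so there is no in-paper argument to compare against. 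Your route is in fact somewhat stronger than what is needed: rather than exhibiting a single near-optimal $\lambda$ and lower-bounding $f$ there (the more common tactic for such lemmas), you compute $\max_\lambda f(\lambda) = |a| - b\ln(1+|a|/b)$ exactly and then show the stated rational expression is a lower bound for that quantity; this makes transparent exactly how much is lost in the relaxation. Two cosmetic points: the factor $2(2t+3)$ you ``clear'' is the denominator of the right-hand side, not a common factor of the left-hand side, and you should note explicitly that multiplying through by it preserves the inequality because $2t+3>0$ for $t \geq 0$; also, for $b>0$ the endpoints $\lambda = \pm 1$ give $f = -\infty$, which is why the interior critical point is the global maximizer.
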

Thus,
\begin{align*}
  \int \psi^{\star}(\theta, \mu_{\theta}) \diff \ps
  &\stackrel{(a)}{\geq}
    \lambda \int \sum_{i=1}^n \prr{f(\theta,X_i) - (\hat{\mu}_{\theta} + \epsilon_{\theta})} \diff \ps\\
  &\quad+ \big(\ln(1 - |\lambda|) + |\lambda|\big) \int \sum_{i=1}^n \prr{f(\theta,X_i) - (\hat{\mu}_{\theta} + \epsilon_{\theta})}^2 \diff \ps\\
  &=
    - n \lambda \int \epsilon_{\theta} \diff \ps\\
  &\quad+ \big(\ln(1 - |\lambda|) + |\lambda|\big) \prr{
    \int \sum_{i=1}^n (f(\theta,X_i) - \hat{\mu}_{\theta})^2 \diff \ps + n \int \epsilon_{\theta}^2 \diff \ps
    }\\
  &\stackrel{(b)}{\geq}
    - n \lambda \int \epsilon_{\theta} \diff \ps\\
  &\quad + \big(\ln(1 - |\lambda|) + |\lambda|\big) \prr{
    \int \sum_{i=1}^n (f(\theta,X_i) - \hat{\mu}_{\theta})^2 \diff \ps + n \prr{\int \epsilon_{\theta} \diff \ps}^2
    }\\
  &\stackrel{(c)}{\geq}
    \frac{n^2 \prr{\int \epsilon_{\theta} \diff \ps}^2}{(4/3) n \left| \int \epsilon_{\theta} \diff \ps \right|
    +    
    2 \int \sum_{i=1}^n (f(\theta,X_i) - \hat{\mu}_{\theta})^2 \diff \ps + 2 n \prr{\int \epsilon_{\theta} \diff \ps}^2
    },
\end{align*}
where step $(a)$ comes by application of \cref{eq:bernproof1,eq:logineq}, step $(b)$ comes by Jensen's inequality, and step $(c)$ comes by application of \Cref{lem:empiricalbernsteintechnical}.
Now, the above combined with \cref{thm:pac-bayes-cb} gives
\begin{align*}
  n^2 \prr{\int \epsilon_{\theta} \diff \ps}^2
  \leq
  n \, \prr{\sC_n + \ln \frac{1}{\delta}} \,
  \prr{ \frac43 \left| \int \epsilon_{\theta} \diff \ps \right|
    +    
  2 \hat{V}(\ps) + 2 \prr{\int \epsilon_{\theta} \diff \ps}^2}~.
\end{align*}
Finally, solving the above for $\int \epsilon_{\theta} \diff \ps$, using subadditivity of square root, and relaxing some numerical constants
we get
\begin{align*}
  \abs{\int \epsilon_{\theta} \diff \ps}
  \leq
  \frac{\sqrt{2 \, \prr{\sC_n + \ln \frac{1}{\delta}} \, \hat{V}(\ps)}}{\prr{\sqrt{n} - \frac{2}{\sqrt{n}} \, \prr{\sC_n + \ln \frac{1}{\delta}}}_+}
  +
  \frac{2 \prr{\sC_n + \ln \frac{1}{\delta}}}{\prr{n - 2 \, \prr{\sC_n + \ln \frac{1}{\delta}}}_+}~.
\end{align*}
\jmlrQED
\subsection{Proof of \Cref{lem:maxlogwealthisconvex}}
\label{sec:maxlogwealthisconvex}
We can rewrite $f$ as
\[
f(x) = \max_{0 \leq b \leq 1} \ \ln\left[1+(c-x)\left(-\frac{1}{1-x}+\left(\frac{1}{1-x}+\frac{1}{x}\right)b \right)\right]~.
\]
Now, consider the argument of the max. We claim that it is convex in $x$ for any $b \in [0,1]$.
In fact, the second derivative is
\begin{align}\label{eq:20230202}
   \frac{1}{x^2}+\frac{1}{(1-x)^2}-\frac{(b+c-1)^2}{(-x(b+c)+b c +x)^2}
    = \frac{1}{x^2}+\frac{1}{(1-x)^2}-\frac{1}{(x + \frac{bc}{1-b-c})^2}
\end{align}
We claim that for $b,c\in[0,1]$, we have $g(b,c)\df\frac{bc}{1-b-c} \in (-\infty,-1] \cup [0,\infty)$.
To see this, fix $b$ and consider $c \le 1-b$.
In this regime, $g(b,0)=0$ and $g(b,c)$ is increasing to infinity as $c$ goes from 0 to $1-b$.
In the other regime of $c > 1-b$, we have $g(b,1) = -1$ and $g(b,c)$ is decreasing as $c$ decreases from 1 to $1-b$.
This proves the claim.

Therefore, to lower bound \eqref{eq:20230202} we need to lower bound $(x + z)^2$ where $z \in (-\infty,-1] \cup [0,\infty)$.
Since $(x+z)^2$ is minimized at $-x$ but the range of $z$ never includes $-x$ (except for the boundary case), we have that $\min_{z\in (-\infty,-1] \cup [0,\infty)} (x+z)^2 = \min\{(x-1)^2, x^2\}$.
Therefore,
\begin{align*}
    -\frac{(b+c-1)^2}{(-x(b+c)+b c +x)^2} \ge -\max\left\{\frac{1}{(x-1)^2}, \frac{1}{x^2}\right\} \ge - \frac{1}{(x-1)^2} - \frac{1}{x^2}~.
\end{align*}
Hence, $f$ is a maximum of convex functions, that concludes the proof.
\jmlrQED
\subsection{Proof of \Cref{prop:mcboost}}
\label{sec:mcboost}
It is clear that the optimal log-wealth $\psi_n^{\star}(\th,\mu_\th)$ is non-negative over all arguments, because the maximization range includes 0.
Then, for a fixed block $B_k$, by the non-negativity of $\psi_n^{\star}$, Markov's inequality gives
\begin{align*}
  \PP\left\{ e \int \psi_n^{\star}(\th, \mu_{\th}) \diff \ps(\th) \leq \frac1m \sum_{i \in B_k} \psi_n^{\star}(\th_i, \mu_{\th_{i}}) \right\} \leq 1/e~.
\end{align*}
Since all blocks are independent,
\begin{align*}
  \PP\left\{ \bigcap_{k \in [K]} \Big\{ e \int \psi_n^{\star}(\th, \mu_{\th}) \diff \ps(\th) \leq \frac1m \sum_{i \in B_k} \psi_n^{\star}(\th_i, \mu_{\th_{i}}) \Big\} \right\} \leq e^{-K}
\end{align*}
and so there exists at least one $B_k$ such that with probability at least $1-e^{-K}$,
\begin{align*}
  \int \psi_n^{\star}(\th, \mu_{\th}) \diff \ps(\th) \geq \frac{1}{e m} \sum_{i \in B_k} \psi_n^{\star}(\th_i, \mu_{\th_{i}})~.
\end{align*}
Thus, with probability at least $1-e^{-K}$,
\begin{align*}
  \int \psi_n^{\star}(\th, \mu_{\th}) \diff \ps(\th) \geq \min_{k \in [K]} \ \frac{1}{e m} \sum_{i \in B_k} \psi_n^{\star}(\th_i, \mu_{\th_{i}})~.
\end{align*}
\jmlrQED

Note that we can change $e$ to any other constant $C>1$. Then, with probability $1-C^{-K}$, 
\begin{align*}
  \int \psi_n^{\star}(\th, \mu_{\th}) \diff \ps(\th) \geq \min_{k \in [K]} \ \frac{1}{C m} \sum_{i \in B_k} \psi_n^{\star}(\th_i, \mu_{\th_{i}})~.
\end{align*}
\subsection{Proof of \Cref{prop:mc}}
\label{sec:mc}
Recall that $(\th_i)_{i \in B_k} \sim \ps^m$ for any $k \in [K]$ and $\mu_{\th_i} = \EE[f(\th_i, X_1)], \forall i\in [Km]$.
Our goal is to find an upper and lower bound for $\mu^* := \int \mu_\th \dif \ps(\th)$.
Given $\ps(\th)$, \Cref{thm:pac-bayes-cb} gives us
\begin{align}\label{eq:pac-bayes-cb-local}  \PP\left\{  \exists n \in \mathbb{N}, \exists P_n: \ \int \psi_n^{\star}(\th,\mu_\th) \dif \ps(\th) - \sC_n \le  \ln\frac{1}{\delta}\right\} \ge 1-\dt~.
\end{align}
Moreover, using the union bound and the standard KL-divergence concentration inequality for $[0,1]$-bounded random variable, we have
\begin{align}
    \PP\left\{\max_{k\in[K]} \ \kl\del{\fr1m \sum_{i \in B_k} \mu_{\th_i},~ \mu^*}\le \ln\fr{K}{2\delta} \right\} \ge 1 - \dt ~.
\end{align}
Furthermore, \Cref{prop:mcboost} states that
\begin{align}\label{eq:mcboost}   \PP\left\{ \min_{k \in [K]}\ \fr1m\sum_{i \in B_k} \psi_n^{\star}(\th_i, \mu_{\th_i}) \leq \int \psi_n^{\star}(\th,\mu_\th) \dif \ps(\th) \bmid X_1, \dots, X_n\right\} \ge 1 - e^{-K}~.
\end{align}
By the union bound, and assuming setting $K=\lceil \ln(1/\delta) \rceil$ one can see that with probability at least $1-3\dt$, the concentration events in the three inequalities are all true.
Assume that these events are true.
It suffices to show $\mu^* \le M_U$ since the proof of $\mu^* \ge M_L$ is symmetric.

Denote by $\nu^U_{\th_i}$ the solutions of the optimization problem in \eqref{eq:optimub0}.

Let $k^* = \arg \min_{k\in[K]} \ \fr1m\sum_{i \in B_k} \psi_n^{\star}(\th_i, \mu_{\th_i})$.
By the events in~\eqref{eq:pac-bayes-cb-local} and~\eqref{eq:mcboost}, we have that $(\mu_{\th_i})_{i\in B_{k^*}}$ is a feasible solution of~\eqref{eq:optimub0} with $k=k^*$.
This implies that 
\begin{align}\label{eq:optim-consequence}
    \bar\mu(k^*) := \fr1m \sum_{i\in B_{k^*}} \mu_{\th_i}\le \fr1m \sum_{i\in B_{k^*}} \nu^U_{\th_i} \le \fr1m \sum_{i\in B_{\hk_U}} \nu^U_{\th_i} = \bar\nu_U(\hk_U) =: \nu^*.
\end{align}
Consider the following two cases:
\begin{itemize}
    \item Case 1: $\nu^* \ge \mu^* $.
    \\
    We just need to verify that $\nu^* \le M_U$. This holds by the definition of $M_U$.
    \item Case 2: $\nu^* < \mu^*$. 
    \\
    We have     \begin{align*}
        \kl(\nu^*, \mu^*) \le \kl(\bar\mu(k^*), \mu^*) \le \ln\fr{K}{2\dt} \stackrel{\eqref{eq:MU}}{=} \kl(\nu^*, M_U)~.
    \end{align*}
    where the first inequality is due to $\bar\mu(k^*) \le \nu^* < \mu^*$.
    Applying the monotonicity of $\kl$ in the second argument to  $\kl(\nu^*, \mu^*) \le \kl(\nu^*, M_U)$, we have $\mu^* \le M_U$.
\end{itemize}
This concludes the proof.
\jmlrQED

\section{Monte Carlo experiment}
\label{sec:monte-carlo appendix}
In this section, we investigate numerically the Monte Carlo approximation discussed in Section~\ref{subsubsec: monte-carlo approximation}. In particular, we want to validate the claim that the confidence intervals calculated with Algorithm~\ref{alg:mcapprox} are better than the ones of \cite{maurer2003bound} when enough Monte Carlo sample are used.

Here is the list of parameters in our experiment:
\begin{itemize}
    \item The number of samples: $n=32$ (Figure \ref{fig:monte_carlo change m}), $n \in \{2^c: c\in \{1,\dots,4\} \}$ (Figure \ref{fig:monte_carlo change n large m}).
    \item The number of groups: $K=4$, the corresponding multiplier: $C=2.1147$ (check the end of Appendix \ref{sec:mcboost}). 
    \item The number of Monte Carlo samples in each group: $m \in \{2^c: c=1,\dots, 10 \}$ (Figure \ref{fig:monte_carlo change m}), $m=256$ (Figure \ref{fig:monte_carlo change n large m}).
    \item Parameter space $\Theta = \mathbb{R}$.
    \item Prior distribution $P_0$ is $N(0,1)$.
    \item Posterior distribution $P_n$ is $N(0,0.25)$.
    \item Samples: $X_1, \dots, X_n$ drawn from $N(0,1)$, i.i.d.
    \item $f(x,\theta)=(\mathrm{erf}(x\theta)+1)/2$.
    \item Failure probability $\dt=0.05$.
\end{itemize}

For the fair use of parameters, we ensure that the total number of MC samples used in Maurer's bound matches that of ours.
That is, when ours use $m$ Monte Carlo samples for each group $k\in[K]$, Maurer's bound use $M=m\cdot K$ Monte Carlo samples.
We describe how we compute Maurer's bound numerically in Algorithm~\ref{alg:maurer_monte_carlo}.

\begin{algorithm}
\caption{Monte Carlo Approximation for Maurer's Bound}
\label{alg:maurer_monte_carlo}
\begin{algorithmic}[1]
\STATE \textbf{Input}: Failure probability $\dt$, sample size parameters $M \in \NN$, posterior $\ps$, prior $\pz$
\STATE Sample $(\th_i)_{i \in [M]} \sim \ps^M$
\STATE $\hat{\mu}_M = \frac{1}{M} \sum_{j=1}^M \hat{\mu}_{\theta_i} = \frac{1}{Mn} \sum_{j=1}^M \sum_{i=1}^n f(X_i, \theta_i)$
\STATE Let $w = \sqrt{\frac{\ln(\frac{2}{\delta})}{2M}}$
\STATE $\hat k_U = \hat{\mu}_M +w$ and $\hat k_L = \hat{\mu}_M -w$
\STATE Compute
\vspace{-.5em}
\begin{align}\label{eq:Maurer22}
    M_U \!=\! \max\left\{\mu\!: \kl\left( \hat k_U, \mu\right)\le\! \frac{\mathcal{C}_n+\ln \frac{2}{\delta}}{n}\right\}~,\,
    M_L \!=\! \min\left\{\mu\!: \kl\left( \hat k_L, \mu\right)\le\! \frac{\mathcal{C}_n +\ln \frac{2}{\delta}}{n}\right\}
\end{align}
\STATE \textbf{Output:} $M_L$ and $M_U$
\end{algorithmic}
\end{algorithm}

The value $w$ in \Cref{alg:maurer_monte_carlo} is the confidence width of the Monte Carlo error between $\hat{\mu}_m$ and $\int \hat \mu_\theta \diff \ps (\theta)$ based on the following Proposition \ref{prop:monte-carlo width}.

\begin{proposition}[Monte Carlo error bound]
  \label{prop:monte-carlo width}
  Let $\delta \in (0,1]$. Given the samples $X_1, \dots, X_n$,
  with probability at least $1-\delta$ over the Monte Carlo samples $(\theta_i)_{i \in [M]}$, we have
  \begin{align*}
  \frac{1}{M}\sum_{i=1}^M \hat\mu_{\theta_i} - \int \hat{\mu}_{\theta} \diff \ps(\th)
  \leq
    \sqrt{\frac{\ln (\frac{1}{\delta})}{2M}}~.
\end{align*}
\end{proposition}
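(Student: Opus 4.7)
The plan is straightforward: once we condition on the data sample $S = (X_1,\ldots,X_n)$, the empirical means $\hat\mu_\theta = \frac{1}{n}\sum_{i=1}^n f(\theta,X_i)$ become a deterministic function of $\theta$, and the Monte Carlo draws $\theta_1,\ldots,\theta_M \sim \ps$ are i.i.d.\ (note that $\ps$ is itself a deterministic measure once $S$ is fixed, by the kernel formalism of \Cref{sec:def}). So the proof reduces to a single application of Hoeffding's inequality for bounded i.i.d.\ random variables.

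Concretely, I would proceed as follows. First, fix the realization of $S$ and observe that since $f$ takes values in $[0,1]$, we have $\hat\mu_{\theta_i} \in [0,1]$ almost surely for each $i \in [M]$. Second, verify that conditionally on $S$ the variables $\hat\mu_{\theta_1},\ldots,\hat\mu_{\theta_M}$ are i.i.d.\ with common mean
\begin{align*}
\E[\hat\mu_{\theta_i} \mid S] \;=\; \int \hat\mu_\theta \diff \ps(\theta)~.
\end{align*}
Third, invoke one-sided Hoeffding's inequality for the sum of $M$ independent $[0,1]$-valued random variables to obtain
\begin{align*}
\PP\left\{\frac{1}{M}\sum_{i=1}^M \hat\mu_{\theta_i} - \int \hat\mu_\theta \diff \ps(\theta) \geq \sqrt{\frac{\ln(1/\delta)}{2M}} \;\Bigm|\; S \right\} \leq \delta~.
\end{align*}
Since the bound holds for every realization of $S$, it also holds unconditionally with the same probability.

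There is no real obstacle here; the only subtlety worth double-checking is the measurability argument ensuring that $\ps$ treated as a probability kernel yields an honest i.i.d.\ sample of $\theta_i$'s conditional on $S$, which follows directly from the definition of probability kernels in \Cref{sec:def}. The boundedness $f \in [0,1]$ is given by hypothesis of \Cref{thm:pac-bayes-cb}, so Hoeffding applies with sub-Gaussian parameter $1/(4M)$, which yields exactly the stated width $\sqrt{\ln(1/\delta)/(2M)}$.
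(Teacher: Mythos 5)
Your proposal is correct and follows essentially the same route as the paper: condition on the data sample $S$, observe that the $\hat\mu_{\theta_i}$ are i.i.d.\ $[0,1]$-valued with conditional mean $\int \hat\mu_\theta \diff \ps(\theta)$, and apply one-sided Hoeffding's inequality. Your additional remarks on the kernel formalism and on lifting the conditional bound to an unconditional one are fine elaborations of details the paper leaves implicit.
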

\begin{proof}
When the set of samples $\{X_i\}_{i=1}^n$ is given, we can consider $\{ \hat{\mu}_{\theta_i}\}_{i=1}^M$ as i.i.d. samples from a distribution supported on $[0,1]$. Therefore, by Hoeffding's inequality, 
\[
\PP \left\{\frac{1}{M} \sum_{i=1}^M \hat{\mu}_{\theta_i} - \int \hat{\mu}_\theta \diff \ps (\theta) \geq \sqrt{\frac{\ln (\frac{1}{\delta})}{2M}}\right\} \leq \dt ~.
\]
\end{proof}

When the parameter space $\Theta$ is discrete, we can precisely compute $\int \hat \mu_{\theta}\diff \ps (\theta)$, and from this value we can use Maurer's bound to obtain the confidence bound. However, in our current setting we only have $[\hat k_L, \hat k_U]$ that is a confidence interval for $\int \hat{\mu}_{\theta}\diff \ps (\theta)$. In this case, the confidence bound of actual $\int \mu_\theta \diff \ps (\theta)$ is the union of all possible confidence intervals, or formally, 
\[
\cup_{\mu' \in [\hat k_L, \hat k_U]} \left\{ \mu: \kl(\mu', \mu)\leq \frac{\mathcal{C}_n + \ln \frac{1}{\delta}}{n}\right\}~.
\]

Thanks to the property of the $\kl$, we only need to check two endpoints $\hat k_L$ and $\hat k_U$ to get the final Maurer's bound $(M_L , M_U)$.

\begin{figure}[t]
\centering
\begin{minipage}[b]{0.48\textwidth}
\includegraphics[width=\linewidth]{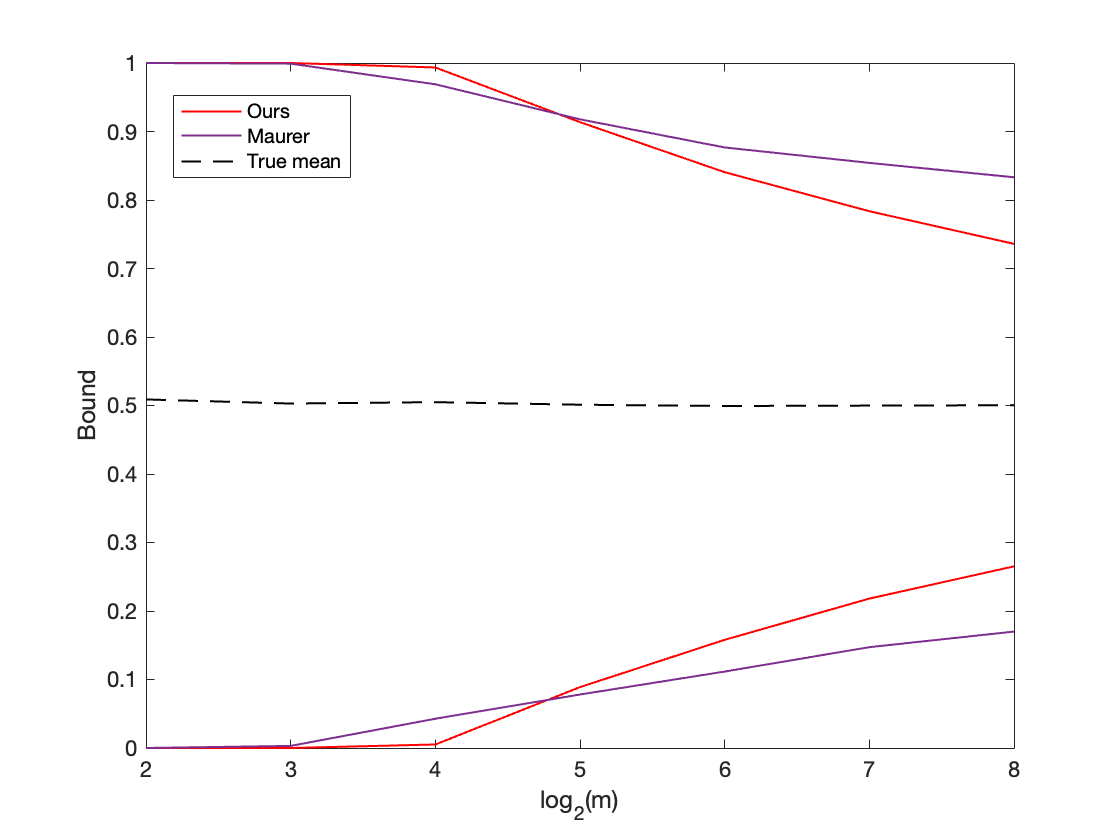}
\caption{Confidence width vs Monte Carlo samples ($m$).}
\label{fig:monte_carlo change m}
\end{minipage}%
\hfill
\begin{minipage}[b]{0.48\textwidth}
\includegraphics[width=\linewidth]{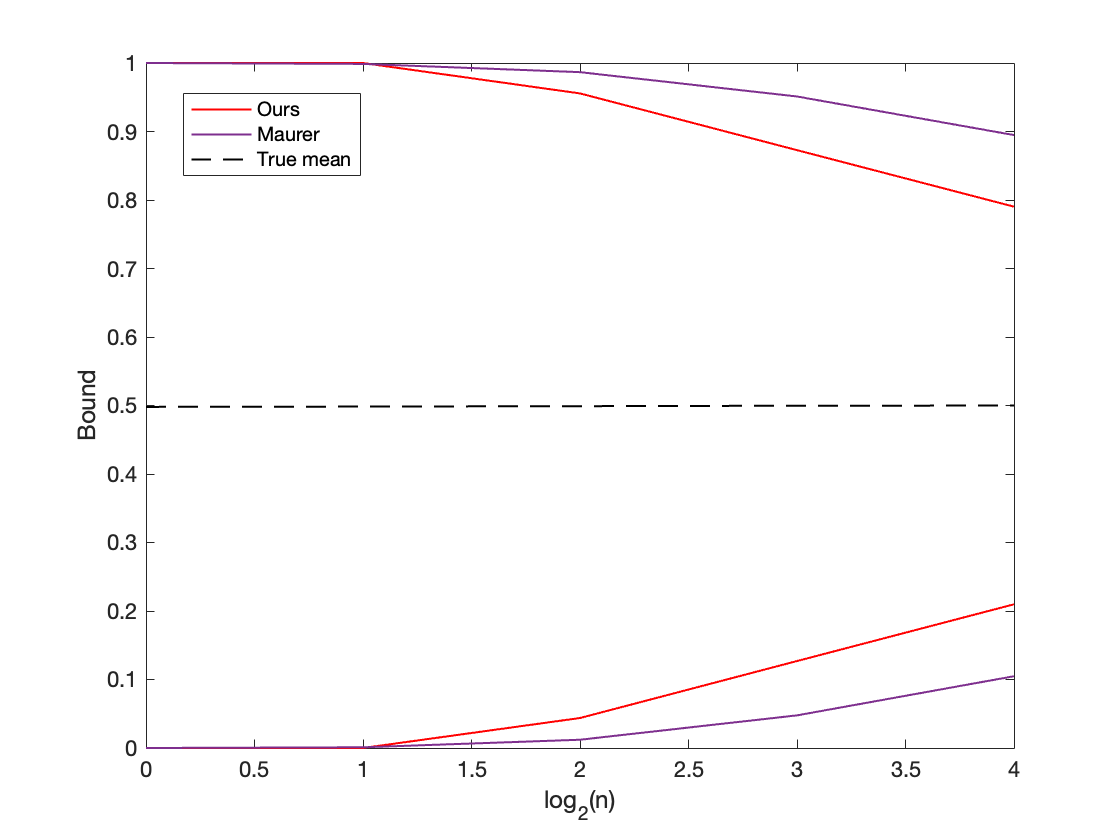}
\caption{Confidence width vs the number of samples ($n$).}
\label{fig:monte_carlo change n large m}
\end{minipage}
\end{figure}


Figure \ref{fig:monte_carlo change m} shows that our Algorithm \ref{alg:mcapprox} outperforms the Maurer's bound with sufficient amount of Monte Carlo samples.
Even with the increasing number of data samples (Figure \ref{fig:monte_carlo change n large m}), our algorithm steadily outperforms the Maurer's bound. 



\end{document}